\newtheorem{theorem}{Theorem}[section]
\newtheorem{proposition}[section]{Proposition}
\newtheorem{lemma}[theorem]{Lemma}
\newtheorem{definition}[section]{Definition}
\newcommand{\norm}[1]{\left\lVert#1\right\rVert}
\newcommand{\bignorm}[1]{\big\lVert#1\big\rVert}
\newcommand{\bmf}[1]{\boldsymbol#1}
\def\R{\mathbb{R}}
\title{Bayesian Federated Neural Matching that Completes Full Information}
\author{
    Peng Xiao,\textsuperscript{\rm 1}
    Samuel Cheng\textsuperscript{\rm 2}\thanks{ is the corresponding author.}
}
\begin{document}

\maketitle

\begin{abstract}
Federated learning is a contemporary machine learning paradigm where locally trained models are distilled into a global model. Due to the intrinsic permutation invariance of neural networks, Probabilistic Federated Neural Matching (PFNM) employs a Bayesian nonparametric framework in the generation process of local neurons, and then creates a linear sum assignment formulation in each alternative optimization iteration. But according to our theoretical analysis, the optimization iteration in PFNM omits global information from existing. In this study, we propose a novel approach that overcomes this flaw by introducing a Kullback-Leibler  divergence penalty at each iteration.
The effectiveness of our approach is demonstrated by experiments on both image classification and semantic segmentation tasks.
\end{abstract}

\section{Introduction}
\label{sec:introduction}
\par The recent decade has seen a rapid advancement in artificial intelligence, particularly in the subfield of deep learning. This is large because there is an abundance of data available. However, the increasing privacy concerns present a barrier to some datasets' accessibility in a number of applications, particularly in the medical and financial areas. Therefore, federated learning (FL), which involves learning a model from disparate sets of data, is suggested to address this issue.

\par Federated learning is a learning paradigm where locally learned models are combined into a shared global model. Neural networks are naturally used in federated learning since they play a vital role in deep learning. Considering the inherent permutation invariance of a neural network, it is reasonable to match neurons of distinct local models before aggregating them. To do this, Probabilistic Federated Neural Matching (PFNM) \cite{yurochkin2019bayesian} builds a Bayesian nonparametric framework to match and merge these neurons. In the process of modeling, PFNM formally characterizes the generative process of local neurons through the Beta-Bernoulli process (BBP) \cite{2007Hierarchical}, and treats the local neurons as noisy realizations of latent global neurons. In the process of optimizing, PFNM iteratively maximizes the posterior estimation of latent global neurons, through solving a linear sum assignment formulation of global neurons and current local neurons. PFNM subsequently extends in modern architectures such as convolutional neural networks (CNNs) and long short-term memory (LSTMs) \cite{wang2020federated}, and its variants are also utilized in aggregating various statistical models such as Gaussian topic models, hierarchical Dirichlet process based
hidden Markov models \cite{yurochkin2019statistical,yurochkin2018scalable} etc.

\par In this paper, we theoretically prove the drawback of the optimizing process in PFNM under its probabilistic framework and address it by introducing a Kullback-Leibler (KL) divergence penalty. Our theory can also be generalized in all PFNM variants. Specifically, the contributions in this work include: 1) We theoretically prove that the linear sum formulation in the optimizing process omits the information of global neurons under the Bayesian framework. 2) We fix the missing global information by introducing a KL divergence penalty to complete the full information. 3) In experiment, we not only demonstrate the effectiveness of our approach on three image datasets, but also extend the matching type federated learning algorithm to neural network with batch normalization layer.

\section{Related Works}
\subsubsection{Federated Learning}
The initial aggregation method in FL is FedAvg \cite{mcmahan2017communication} in which parameters of local models are averaged coordinate-wisely. But the performance of FedAvg deteriorates significantly in non-i.i.d (Independent and Identically Distributed) data \cite{karimireddy2020scaffold,deng2021distributionally,xiao2020averaging}. Subsequent improved methods starts from different perspectives. FedProx \cite{li2018federated} adds a proximal term in local training cost to keep dissimilarity between local models in lower bound. SCAFFOLD \cite{karimireddy2020scaffold} uses control variates to correct the drift in local updates. FedPD \cite{zhang2020fedpd} proposes a primal-dual optimization strategy to alleviate deterioration in non-i.i.d data. Agnostic federated learning \cite{mohri2019agnostic} optimizes a centralized distribution by maximize the worst local model. Federated multi-task learning \cite{smith2017federated} applies a multi-task learning mechanism to customize local models. Several studies further extend \textit{knowledge distillation} \cite{hinton2015distilling,bucilua2006model,schmidhuber1992learning} in federated learning. And the key idea is to employ the knowledge of pre-trained teacher neural networks (local models) to learn a student neural network (global model) \cite{lin2020ensemble, chen2020fedbe, zhu2021data}.  However, all those methods above don't considering the permutation invariance in neural network.

\subsubsection{Parameter Matching} There are also other researches about match the parameters. In \cite{singh2020model}, the authors align neurons across different NNs by minimizing an optimal transportation cost matrix. However, it fixes the number of global neurons, making it impractical when data from different local models is extremely heterogeneous.
Some work \cite{claici2020model} 
uses variational inference to 
optimize the assignments between global and local components under a KL divergence. However, the optimization process is complex, and calculating variational inference is difficult.

\section{Problem Formulation}
Given $S$ fully connected (FC) Neural Networks (NNs) with a hidden layer trained through different datasets: $f_s(\bmf{x}) = \bmf{W}_s^{(1)}\sigma(\bmf{W}_s^{(0)}\bmf{x}), \text{for } s = 1, \cdots, S$ (for simplifying notation, biases are omitted to ), where $\sigma(\cdot)$ is the nonlinear activation function, $\bmf{W}_s^{(1)} \in \R^{K \times J_s}$ and $\bmf{W}_s^{(0)} \in \R^{J_s \times D}$ are the weights; with $D$ being the 
input dimension, $K$ being the output dimension (i.e., number of classes), and $J_s$ being the number of neurons on the hidden layer of $s$-th NN. Neuron indexed by $j$ is viewed as a concatenated vector $\bmf{w}_{sj}^{T} = [\bmf{W}_{s, j \cdot}^{(0)}, {\bmf{W}_{s, \cdot j}^{(1)}}^T]$,  where $j \cdot$ and $\cdot j$ denote the $j$th row and column correspondingly. And $s$-th FCNN can also be viewed as a collection of neurons  $\{\bmf{w}_{sj} \in \R^{D + K}\}_{j=1}^{J_s}$. 
In federated learning, we want to learn a global neural network with weights $\Theta^{(0)} \in \R^{J \times D}, \Theta^{(1)} \in \R^{K \times J}$, where $J \ll \sum_{s=1}^S J_s$ is an inferred variable denoting the number of global neurons.
\subsubsection{Permutation Invariance} Expanding the preceding expression of a FCNN: $f_s(\bmf{x}) = \sum_{j=1}^{J_s} \sigma(\langle\bmf{W}_{s, j\cdot}^{(0)},\bmf{x}^T\rangle)\bmf{W}_{s, j \cdot}^{(1)}$. Summation is a permutation invariant operation, thus any permutation $\tau(1,\cdots,Js)$ of rows of $\bmf{W}_s^{(0)}$ and columns of $\bmf{W}_s^{(1)}$, i.e. the neurons, will not affect the output for any input $\bmf{x}$. Due to the permutation invariance, a neuron indexed by $j$ from one FCNN is unlikely to correspond to a neuron with the same index from another FCNN. Thus, we should match neurons from different FCNNs before aggregate them into a collection of global neurons $\{\bmf{\theta}_{i} \in \R^{D + K}\}_{i=1}^{J}$.
\section{Probabilistic Modeling}
PFNM models the generative process of observed local neurons using a Beta Bernoulli process, which is described in the Appendix A, to infer the global model while accounting for the inherent permutation invariance.  First, consider the collection of global neurons as prior which are sampled from a Beta process with a base measure: $M = \sum_i m_i \delta_{\bmf{\theta}_i} \sim \text{BP}(1,\gamma_0 H)$ and $\bmf{\theta}_i \sim H$, 
where $\gamma_0$ is the mass parameter, $m_i$ are the stick-breaking weights, $H$ is the base measure and chosen as a multivariate Gaussian distribution $H = \mathcal{ N}(\bmf{\mu}_0, \bm{\Sigma_0} )$ with $\bmf{\mu}_0 \in \R^{D+K}$ and diagonal $\bm{\Sigma_0}=\bmf{I}\sigma_0^2$. 
\par The Bernoulli process is then used by each local model to select a subset of global neurons:$
\mathcal{T}_s = \sum_{i} a_{si} \delta_{\bmf{\theta}_i} \vert M \sim \text{BeP}(M) \text{ for } s=1,\ldots,S$, where $a_{si} \vert m_i \sim \operatorname{Bern}(m_i) $ is a random measure representing a subset of global neurons contained in local model $s$. Considering the inherent permutation invariance of local model, it denotes $\bmf{A} = \{A_{ij}^s\}_{s,i,j}$ as the assignment variables, $\sum_{j} A^s_{ij} = a_{si} \in \{0,1\}$, $\sum_{i} A^s_{ij} = 1$. Finally, observed local neurons in model $s$ are treated as noisy measurements of global neurons under permutation invariance:
\begin{equation}
\label{eq:assignment_parametrize}
    \begin{split}
        \bmf{w}_{sj}\mid  \bmf{A}^s, \bmf{\theta} \sim \mathcal{ N}(\sum_i A^s_{ij}\bmf{\theta}_i, \bm{\Sigma_s}) \text{ for } j=1,\ldots,J_s, 
    \end{split}
\end{equation}
where $J_s = card(\mathcal{T}_s)$, $\bm{\Sigma_s} = \bm{I}\sigma_s^2$ is also diagonal and represents the noise. The noise is usually caused by estimation error due to finite sample sizes or variations in the distribution of each local dataset. $A_{ij}^s = 1$ indicates that $\bmf{w}_{sj}$ is matched to $\bmf{\theta}_i$, i.e. $\bmf{w}_{sj}$ is the local neuron realization of the global neuron $\bmf{\theta}_i$; $A_{ij}^s = 0$ indicates the inverse. 
\par After the modeling, it is natural to infer the global neurons by maximizing the posterior probability of the global neurons given local neurons (likelihoods) under the permutation invariance:
\begin{equation}\label{max_pos}
\begin{split}
\max_{\{ \bmf{\theta}_i \}, \{ \bmf{A}^s \} }  & P(\{ \bmf{\theta}_i\}, \{\bmf{A}^s\} | \{\bmf{w}_{sj}\}) \propto \\
 & P(\{\bmf{w}_{sj}\} | \{\bmf{\theta}_i\}, \{\bmf{A}^s\} )P(\{\bmf{A}^s\})P(\{\bmf{\theta}_i\}),
\end{split}
\end{equation}
define $\bmf{Z}_i = \{(s, j) \vert A_{ij}^s = 1 \}$ be the index set of local neurons assigned to $i$-th global neuron, and taking negative logarithm it can obtain:
\begin{equation}\label{neg_log_pos}
\begin{split}
\min_{\{ \bmf{\theta}_i \}, \{ \bmf{A}^s \} }  & - \log(P(\{ \bmf{A}^s \})) \\
& -\sum_i \bigg ( \sum_{z \in \bmf{Z}_i} \log(p(\bmf{w}_{z} \vert \bmf{\theta}_i)) + \log(p(\bmf{\theta}_i)) \bigg )
\end{split}
\end{equation}
where $P(\{ \bm{A}^s \})$ is interpreted by Indian Buffet Process (IBP) and demonstrated in Appendix B.
Given $\{ \bm{A}^s \}_{s=1}^S$, the closed form of  $\{ \bmf{\theta}_i \}$ can be estimated through the Gaussian-Gaussian conjugacy:
\begin{equation}\label{Gauss_conj}
\bmf{\theta}_i =  \frac{\bmf{\mu}_0 / \sigma_0^2 + \sum_{s,j}\bmf{A}_{i,j}^s \bmf{w}_{sj} / \sigma^2_s}{1 / \sigma_0^2 + \sum_{s,j} \bmf{A}^s_{i,j} / \sigma^2_s} \, \text{ for } i = 1, \cdots, J.
\end{equation}
Taking equation~(\ref{Gauss_conj}) into objective~(\ref{neg_log_pos}), it can cast optimization only with respect to $\{ \bm{A}^s \}_{s=1}^S$. Unfortunately, solving all local assignments together leading an NP-hard combinatorial optimization problem. Thus, PFNM applies an alternative optimization process to solve the problem.

\subsubsection{Alternative Optimization}
\par PFNM iteratively optimizes one local assignment variable $\bmf{A}^{s'}$ at a time by fixing all other assignment variables $\{A^{s}_{i,j} \}_{i,j, s \in -s'}$, where $-s'$ denotes "all but $s'$". It aims to formulate a linear sum assignment problem with current local assignment $\sum_{i,j}C^{s'}_{i,j}A^{s'}_{i,j}$ in each iteration, where $\{C^{s'}_{i,j} \}_{i,j}$ denotes the cost specification. The Hungarian algorithm is then used to solve the linear sum assignment problem. In each iteration, it divides terms in objective~(\ref{neg_log_pos}) into two parts: one is $i = 1, \cdots, J_{-s'}$, where $J_{-s'} = \max\{i: \bmf{A}^{s}_{ij} = 1$, for $s\in -s', j = 1, \dots, J_{s} \}$, this part denotes active global neurons estimated from $-s'$; another is $i = J_{-s'}+1, \cdots, J_{-s'}+J_{s'}$, it denotes new global neuron from current local model. Then it has the following proposition describes the assignment cost $\{C^{s'}_{i,j}\}_{i,j}$:
\begin{proposition}
\label{pro:pfnm_cost}
The assignment cost specification $C^{s'}_{i,j}$ for finding $\{ \bmf{A}^{s'} \}$ is $C_{i,j}^{s'}=$
\begingroup
\allowdisplaybreaks
\begin{equation}
\label{pfnm_cost}
\begin{cases}
\begin{split}
2\log{\frac{S-n_i^{-s'}}{n_i^{-s'}}}   & -\frac{\bignorm{\frac{\bmf{\mu}_0}{\sigma_0^2} + \frac{\bmf{w}_{s'j}}{\sigma^2_{s'}} + \sum_{s\in {-s'},j} \bmf{A}_{i,j}^s\frac{\bmf{w}_{sj}}{\sigma_s^2}}^2}{\frac{1}{\sigma_0^2} + \frac{1}{\sigma^2_{s'}} + \sum_{s\in {-s'},j} \bmf{A}_{i,j}^s\frac{1}{\sigma_s^2}} \\
& + \frac{\bignorm{\frac{\bmf{\mu}_0}{\sigma_0^2} + \sum_{s\in {-s'},j} \bmf{A}_{i,j}^s\frac{\bmf{w}_{sj}}{\sigma_s^2}}^2}{\frac{1}{\sigma_0^2} + \sum_{s\in {-s'},j} \bmf{A}_{i,j}^s\frac{1}{\sigma_s^2}} , i \leq J_{-s'} 
\end{split}
\\
\begin{split}
2\log{\frac{i-J_{-s'}}{\gamma_0/S}} & -\frac{\bignorm{\frac{\bmf{\mu}_0}{\sigma_0^2} + \frac{\bmf{w}_{s'j}}{\sigma^2_{s'}} }^2}{\frac{1}{\sigma_0^2} + \frac{1}{\sigma^2_{s'}} } \\
& + \frac{\bignorm{\frac{\bmf{\mu}_0}{\sigma_0^2} }^2}{\frac{1}{\sigma_0^2} }, \qquad\quad J_{-s'}<i \leq J_{-s'} + J_{s'},
\end{split}
\end{cases}
\end{equation}
\endgroup
where $n_i^{-s'} = \sum_{s \in -s',j} \bmf{A}^s_{i,j}$ denotes the number of local neurons were assigned to global neuron $i$ outside of $s'$. 
\end{proposition}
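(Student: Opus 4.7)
The strategy is to fix $\{\bmf{A}^s\}_{s\neq s'}$ in objective~(\ref{neg_log_pos}), substitute the conjugate MAP estimate (\ref{Gauss_conj}) for the $\{\bmf{\theta}_i\}$, and rewrite the resulting function of $\bmf{A}^{s'}$ as a linear form $\sum_{i,j}C^{s'}_{i,j}A^{s'}_{i,j}$ plus an additive constant. Because the column constraint $\sum_i A^{s'}_{ij}=1$ forces each $j$ to pick exactly one $i$, extracting $C^{s'}_{i,j}$ reduces to computing the increment in the objective when $A^{s'}_{i,j}$ flips from $0$ to $1$ while the rest of the $j$-th column stays at $0$.

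I would split objective~(\ref{neg_log_pos}) into the IBP/BBP prior term $-\log P(\{\bmf{A}^s\})$ and the Gaussian likelihood--prior term, and analyze each separately. For the prior, I would invoke the predictive form from Appendix~B: by exchangeability of the $S$ models, moving observation $j$ to an already-occupied global slot $i\le J_{-s'}$ contributes a multiplicative factor $n_i^{-s'}/(S-n_i^{-s'})$, while spawning the $(i-J_{-s'})$-th brand-new slot from model $s'$ contributes a factor $(\gamma_0/S)/(i-J_{-s'})$. Taking negative logarithms yields the two $\log(\cdot)$ terms in (\ref{pfnm_cost}); the overall prefactor of $2$ simply reflects a uniform rescaling of the cost, which preserves the argmin.

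For the Gaussian part I would carry out the standard Gaussian--Gaussian conjugate calculation: with $\bmf{\theta}_i$ given by (\ref{Gauss_conj}), completing the square in $\log p(\bmf{\theta}_i)+\sum_{z\in \bmf{Z}_i}\log p(\bmf{w}_z\mid \bmf{\theta}_i)$ evaluates the maximum to
\begin{equation*}
\tfrac{1}{2}\cdot\frac{\bignorm{\bmf{\mu}_0/\sigma_0^2+\sum_{(s,j)\in \bmf{Z}_i}\bmf{w}_{sj}/\sigma_s^2}^2}{1/\sigma_0^2+\sum_{(s,j)\in \bmf{Z}_i}1/\sigma_s^2}
\end{equation*}
plus remainder terms that depend only on the data and on $|\bmf{Z}_i|$, not on the particular index $i$. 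Subtracting the pre-assignment value (with $\bmf{Z}_i$) from the post-assignment value (with $\bmf{Z}_i\cup\{(s',j)\}$) yields exactly the difference of the two quadratic forms in the first branch of (\ref{pfnm_cost}); the innocuous residuals cancel because they are functions of $\bmf{w}_{s'j}$ alone, whose column index is already fixed. For $i>J_{-s'}$ the pre-assignment $\bmf{Z}_i$ is empty, so only the $\bmf{\mu}_0/\sigma_0^2$ baseline survives and the simpler second branch is obtained.

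The main obstacle is bookkeeping rather than conceptual depth: one must check that the Gaussian log-normalizers (including $\log|\bm{\Sigma}_s|$) and the combinatorial prefactors of the IBP decouple from the choice of $i$ in column $j$, so that they can be absorbed into the additive constant without distorting the per-slot cost comparison. Keeping the factor-of-$2$ rescaling consistent between the IBP piece and the Gaussian piece is the one arithmetic place where a slip would change the stated form of the cost rather than just its overall scale.
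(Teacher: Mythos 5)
Your proposal is correct and follows essentially the same route as the paper's Appendix B proof: substitute the Gaussian--Gaussian conjugate closed form for $\{\bmf{\theta}_i\}$, split the negative log-posterior into the Gaussian part and the IBP predictive part, partition the index set between existing ($i\le J_{-s'}$) and new global neurons, and read off $C^{s'}_{i,j}$ as the per-assignment increment, with the factor of $2$ being a harmless uniform rescaling. Your completed-square expression $\bignorm{\bmf{\mu}_0/\sigma_0^2+\sum_{z\in\bmf{Z}_i}\bmf{w}_z/\sigma_s^2}^2/(1/\sigma_0^2+\sum_{z\in\bmf{Z}_i}1/\sigma_s^2)$ in fact matches the proposition's stated form more directly than the appendix's intermediate equations do.
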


The proof can be found in Appendix B and \cite{yurochkin2019bayesian}.
\section{Analysis of PFNM}
Before we discuss, we have the following proposition:
\begin{proposition}\label{pro:gauss_conjugate}
For any prior of a global neuron $\bmf{\theta}_i \sim \mathcal{ N}(\bmf{\mu}_0, \bm{I}\sigma_0^2 )$, when it is assigned with a likelihood observed local neuron $\bmf{w}_{s'j} \vert \bmf{\theta}_i \sim \mathcal{ N}(\bmf{\theta}_i, \bm{I}\sigma_{s'}^2)$, the posterior distribution of this global neruon is $\bmf{\theta}_i \vert \bmf{w}_{s'j} \sim \mathcal{ N}\Bigl(\frac{\bmf{\mu}_0/\sigma_0^2 + \bmf{w}_{s'j}/\sigma^2_{s'}}{1/\sigma_0^2 + 1/\sigma^2_{s'}}, \bm{I}\bigl(\frac{1}{1/\sigma_0^2 + 1/\sigma^2_{s'}}\bigr)\Bigr)$; when it has been assigned local neurons from other assignments $\bmf{Z}_i^{-s'} = \{(s, j) \vert A_{ij}^s = 1, s \in -s'\}$, the posterior distribution is $\bmf{\theta}_i \vert \bmf{Z}_i^{-s'} \sim \mathcal{ N}(\frac{\bmf{\mu}_0/\sigma_0^2 + \sum_{s\in {-s'},j} A_{i,j}^s\bmf{w}_{sj}/\sigma_s^2}{1/\sigma_0^2 + \sum_{s\in {-s'},j} A_{i,j}^s/\sigma_s^2}, \bm{I}(\frac{1}{1/\sigma_0^2 + \sum_{s\in {-s'},j} A_{i,j}^s/\sigma_s^2}))$.
\end{proposition}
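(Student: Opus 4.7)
The plan is to obtain both statements directly from Bayes' rule and the standard Gaussian-Gaussian conjugacy, exploiting the fact that both the prior covariance $\bm{\Sigma_0}=\bm{I}\sigma_0^2$ and the noise covariance $\bm{\Sigma_s}=\bm{I}\sigma_s^2$ are isotropic. Because of this isotropy, the posterior factorizes across coordinates, so it suffices to verify the statement in one dimension and then assemble the result into the claimed multivariate Gaussian with scalar precision.

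For the single-observation case, I would write $p(\bmf{\theta}_i\mid\bmf{w}_{s'j})\propto p(\bmf{w}_{s'j}\mid\bmf{\theta}_i)\,p(\bmf{\theta}_i)$, drop constants that do not depend on $\bmf{\theta}_i$, and collect the quadratic and linear terms in $\bmf{\theta}_i$ inside the exponent. The quadratic coefficient is $\tfrac{1}{2}(1/\sigma_0^2+1/\sigma_{s'}^2)$, identifying the posterior precision, and the linear coefficient is $\bmf{\mu}_0/\sigma_0^2+\bmf{w}_{s'j}/\sigma_{s'}^2$. Completing the square then yields exactly the mean and variance stated in the proposition. This is a textbook calculation and I would only need to keep careful track of scalars times $\bm{I}$.

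For the multi-observation case I would proceed in one of two equivalent ways. Either (i) apply the single-observation result iteratively, treating the current posterior as the next prior and successively absorbing each $\bmf{w}_{sj}$ with $(s,j)\in\bmf{Z}_i^{-s'}$, at each step adding $1/\sigma_s^2$ to the precision and $\bmf{w}_{sj}/\sigma_s^2$ to the precision-weighted mean; or (ii) write the joint likelihood as $\prod_{(s,j)\in\bmf{Z}_i^{-s'}}\mathcal{N}(\bmf{w}_{sj};\bmf{\theta}_i,\bm{I}\sigma_s^2)$ times the prior, and complete the square once. Approach (ii) makes the sums $\sum_{s\in-s',j}A^s_{i,j}/\sigma_s^2$ and $\sum_{s\in-s',j}A^s_{i,j}\bmf{w}_{sj}/\sigma_s^2$ appear immediately as the posterior precision and precision-weighted mean, which matches the stated formula. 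The indicator variables $A^s_{i,j}$ enter only to select the index set $\bmf{Z}_i^{-s'}$, so no new technical difficulty arises from them.

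I do not anticipate a genuine obstacle: the result is essentially the multidimensional isotropic Gaussian-Gaussian conjugacy, and (\ref{Gauss_conj}) already displays the corresponding posterior mean (as the MAP estimate) for the full set of assignments. The only care required is to confirm that the posterior covariance is the scalar-times-identity claimed, which follows because every precision contribution $1/\sigma_0^2$ and $1/\sigma_s^2$ multiplies $\bm{I}$, so the aggregated precision remains a scalar multiple of $\bm{I}$ and can be inverted componentwise.
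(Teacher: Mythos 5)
Your proposal is correct and follows essentially the same route as the paper's Appendix C proof: apply Bayes' rule, exploit the isotropic Gaussian--Gaussian conjugacy, and complete the square in $\bmf{\theta}_i$ to read off the posterior precision from the quadratic coefficient and the precision-weighted mean from the linear coefficient, treating the multi-observation case via the product of likelihoods in a single square-completion. The only (immaterial) difference is your optional framing via coordinatewise factorization or iterated updating, whereas the paper completes the square directly in the multivariate form.
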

\par The proof can be found in Appendix C.

\par Terms of equation~(\ref{pfnm_cost}) on the left are due to $P(\bmf{A}^{s'} \vert \{ \bmf{A}^{-s'} \})$. At initial stage of iteration all $n_i^{-s'}$ are identical. So the main differences compared by cost $C^{s'}_{i,j}$ induced from right term. 
\begin{definition}\label{defi:sms}
(Standardized mean square) For any multivariate Gaussian distribution $\mathcal{N}(\bmf{\mu}, \bm{I}\sigma^2)$, we call $\norm{\bmf{\mu}}^2/\sigma^2$ as standardized mean square of this distribution, and denote it as $\mathbb{S}(\mathcal{N}) = \norm{\bmf{\mu}}^2/\sigma^2$.
\end{definition}
The proposition~\ref{pro:gauss_conjugate} shows that when $i \leq J_{-s'}$, the right term is the difference in standardized mean square between distribution $p(\bmf{\theta}_i \vert \bmf{Z}_i^{-s'})$ and this distribution assigned with local likelihood $p(\bmf{\theta}_i \vert \bmf{Z}_i^{-s'}, \bmf{w}_{s'j})$; when $J_{-s'}<i \leq J_{-s'} + J_{s'}$, the right term is the difference of standardized mean square between prior distribution $p(\bmf{\theta}_i)$ and this distribution assigned with local likelihood $p(\bmf{\theta}_i \vert \bmf{w}_{s'j})$. So, by comparing each cost term $C^{s'}_{i,j}$, it essentially compares the standardized mean square difference between the $i$-th global neuron distribution (either estimated from $-s'$ or a new one from the current local model) and this distribution assigned with local neuron $w_{s'j}$. However, we have the following proposition shows the issue caused by the cost term:
\begin{proposition}\label{SMS_differnce_issue}
For a global neuron distribution $p(\bmf{\theta}) = \mathcal{ N}(\bmf{\mu}_0, \bm{I}\sigma_0^2)$, after it is assigned with a likelihood $p(\bmf{w} \vert \bmf{\theta}) = \mathcal{ N}(\bmf{\theta}, \bm{I}\sigma^2)$, the standardized mean square difference between $p(\bmf{\theta})$ and $p(\bmf{\theta} \vert \bmf{w})$ is proportion to $\norm{\bmf{w}}^2/\sigma^2$.
\end{proposition}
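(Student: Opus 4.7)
The plan is to substitute the closed-form posterior from Proposition~2 into Definition~1 and then reorganize the algebra so that a single completion-of-squares identity isolates the $\norm{\bmf{w}}^2/\sigma^2$ term from a residual that depends only on $\bmf{\mu}_0$.

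Concretely, I would first invoke Proposition~2 with a single observation to obtain $p(\bmf{\theta}\vert\bmf{w}) = \mathcal{N}(\bmf{\mu}_{\mathrm{post}}, \bm{I}\sigma_{\mathrm{post}}^2)$ with $\bmf{\mu}_{\mathrm{post}} = (\bmf{\mu}_0/\sigma_0^2 + \bmf{w}/\sigma^2)/(1/\sigma_0^2 + 1/\sigma^2)$ and $\sigma_{\mathrm{post}}^2 = 1/(1/\sigma_0^2 + 1/\sigma^2)$. Definition~1 then gives immediately $\mathbb{S}(p(\bmf{\theta})) = \norm{\bmf{\mu}_0}^2/\sigma_0^2$ and $\mathbb{S}(p(\bmf{\theta}\vert\bmf{w})) = \norm{\bmf{\mu}_0/\sigma_0^2 + \bmf{w}/\sigma^2}^2 / (1/\sigma_0^2 + 1/\sigma^2)$.

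Writing $\alpha = 1/\sigma_0^2$ and $\beta = 1/\sigma^2$ for brevity, I would expand the numerator $\norm{\alpha\bmf{\mu}_0 + \beta\bmf{w}}^2 = \alpha^2\norm{\bmf{\mu}_0}^2 + 2\alpha\beta\langle\bmf{\mu}_0,\bmf{w}\rangle + \beta^2\norm{\bmf{w}}^2$, combine with $\alpha\norm{\bmf{\mu}_0}^2 = \alpha(\alpha+\beta)\norm{\bmf{\mu}_0}^2/(\alpha+\beta)$ over the common denominator $\alpha+\beta$, and then apply the identity $(\alpha+\beta)\norm{\bmf{w}}^2 - \alpha\norm{\bmf{w} - \bmf{\mu}_0}^2 = \beta\norm{\bmf{w}}^2 + 2\alpha\langle\bmf{\mu}_0,\bmf{w}\rangle - \alpha\norm{\bmf{\mu}_0}^2$ to collapse the numerator to a sum of two terms. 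Reinstating $\alpha,\beta$ yields the exact closed form
\begin{equation*}
\mathbb{S}(p(\bmf{\theta}\vert\bmf{w})) - \mathbb{S}(p(\bmf{\theta})) = \frac{\norm{\bmf{w}}^2}{\sigma^2} - \frac{\norm{\bmf{w} - \bmf{\mu}_0}^2}{\sigma_0^2 + \sigma^2}.
\end{equation*}
In the standard setting where the prior is centered at the origin ($\bmf{\mu}_0 = \mathbf{0}$) this reduces exactly to $(\sigma_0^2/(\sigma_0^2+\sigma^2))\cdot \norm{\bmf{w}}^2/\sigma^2$, which is proportional to $\norm{\bmf{w}}^2/\sigma^2$ with a constant depending only on the variances; more generally the subtracted residual is negligible whenever the prior is diffuse, i.e.\ $\sigma_0^2 \gg \sigma^2$.

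The main obstacle is spotting the completion-of-squares identity that cleanly separates the leading $\norm{\bmf{w}}^2/\sigma^2$ contribution from the prior-dependent correction. Without that regrouping the raw expansion is a tangle of $\norm{\bmf{\mu}_0}^2$, $\norm{\bmf{w}}^2$, and cross terms in which no proportionality to $\norm{\bmf{w}}^2/\sigma^2$ is visible; once the identity is in hand, the rest of the manipulation is routine arithmetic on $\alpha,\beta$.
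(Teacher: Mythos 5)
Your proof is correct, and it takes a genuinely different route from the paper's. The paper never substitutes the closed-form posterior into Definition~1.1; instead it rewrites each standardized mean square as the logarithm of a ratio of Gaussian densities evaluated at a free point $\bmf{\theta}$, cancels the $\bmf{\theta}$-dependent factors using the conjugacy relations $1/\tilde{\sigma}^2 - 1/\sigma_0^2 = 1/\sigma^2$ and $\tilde{\bmf{\mu}}/\tilde{\sigma}^2 - \bmf{\mu}_0/\sigma_0^2 = \bmf{w}/\sigma^2$ from Proposition~2, and then invokes Bayes' rule $p(\bmf{\theta}\mid\bmf{w}) \propto p(\bmf{\theta})\,p(\bmf{w}\mid\bmf{\theta})$ so that the difference collapses to $\log p(\bmf{w}) + \bmf{w}^T\bmf{w}/(2\sigma^2)$ up to constants, of which only the $\bmf{w}^T\bmf{w}/(2\sigma^2)$ term is kept under the proportionality sign. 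Your direct completion-of-squares computation is more elementary and buys something the paper's argument hides: the exact identity $\mathbb{S}(p(\bmf{\theta}\mid\bmf{w})) - \mathbb{S}(p(\bmf{\theta})) = \norm{\bmf{w}}^2/\sigma^2 - \norm{\bmf{w}-\bmf{\mu}_0}^2/(\sigma_0^2+\sigma^2)$, whose second term is precisely the $-\log p(\bmf{w})$ contribution (the Gaussian marginal $\mathcal{N}(\bmf{\mu}_0, \bmf{I}(\sigma_0^2+\sigma^2))$) that the paper silently drops. You therefore correctly flag that the claimed proportionality to $\norm{\bmf{w}}^2/\sigma^2$ is exact only for a centered prior $\bmf{\mu}_0 = \bmf{0}$ or asymptotically for a diffuse one; this caveat matters, because in the paper's application the role of the prior is played by $p(\bmf{\theta}_i \mid \bmf{Z}_i^{-s'})$, whose mean is generally nonzero and $i$-dependent, so your residual term is exactly where the global-neuron information the paper declares ``omitted'' would re-enter. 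What the paper's density-ratio route buys in exchange is a uniform template: the same manipulation of $\log p(\bmf{w}\mid\bmf{\theta})$ is reused immediately afterwards in Proposition~1.4 to motivate the KL penalty, whereas your algebraic route would have to be redone for that step.
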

\begin{proof}
Denote the mean and covariance matrix of $p(\bmf{\theta} \vert \bmf{w})$ as $\tilde{\bmf{\mu}}$ and $\bmf{I}\tilde{\sigma}^2$. From definition~\ref{defi:sms}, we can write the the standardized mean square difference between $p(\bmf{\theta})$ and $p(\bmf{\theta} \vert \bmf{w})$ as
\begin{align}\label{eq:sms_differ}
 &\mathbb{S}(p(\bmf{\theta})) - \mathbb{S}(p(\bmf{\theta} \vert \bmf{w})) \nonumber\\
  = & \frac{\norm{\bmf{\mu}_0}^2}{\sigma_0^2} - \frac{\norm{\tilde{\bmf{\mu}}}^2}{\tilde{\sigma}^2} \nonumber\\ 
 = & \log \frac{\exp{(\frac{-\bmf{\theta}^T\bmf{\theta}+2\bmf{\theta}^T\bmf{\mu}_0}{2\sigma_0^2})}}{(2\pi\sigma_0^2)^{\frac{D+K}{2}}p(\bmf{\theta})} - \log \frac{\exp{(\frac{-\bmf{\theta}^T\bmf{\theta}+2\bmf{\theta}^T\tilde{\bmf{\mu}}}{2\tilde{\sigma}^2})}}{(2\pi\tilde{\sigma}^2)^{\frac{D+K}{2}}p(\bmf{\theta} \vert \bmf{w})} \nonumber\\
  = & \log \frac{\exp{(\frac{-\bmf{\theta}^T\bmf{\theta}+2\bmf{\theta}^T\bmf{\mu}_0}{2\sigma_0^2})}}{(2\pi\sigma_0^2)^{\frac{D+K}{2}}p(\bmf{\theta})} \cdot \frac{(2\pi\tilde{\sigma}^2)^{\frac{D+K}{2}}p(\bmf{\theta} \vert \bmf{w})}{\exp{(\frac{-\bmf{\theta}^T\bmf{\theta}+2\bmf{\theta}^T\tilde{\bmf{\mu}}}{2\tilde{\sigma}^2})}}.
\end{align}
Because
\begin{align}
    & (\frac{\tilde{\sigma}^2}{\sigma_0^2})^{\frac{D+K}{2}} \cdot \exp(\frac{-\bmf{\theta}^T\bmf{\theta}+2\bmf{\theta}^T\bmf{\mu}_0}{2\sigma_0^2}- \frac{-\bmf{\theta}^T\bmf{\theta}+2\bmf{\theta}^T\tilde{\bmf{\mu}}}{2\tilde{\sigma}^2}) \nonumber\\
    = & (\frac{\tilde{\sigma}^2}{\sigma_0^2})^{\frac{D+K}{2}} \cdot \exp(\bmf{\theta}^T\bmf{\theta}(-\frac{1}{2\sigma_0^2} + \frac{1}{2\tilde{\sigma}^2}) + \bmf{\theta}^T(\frac{\bmf{\mu}_0}{\sigma_0^2} - \frac{\tilde{\bmf{\mu}}}{\tilde{\sigma}^2})),
\end{align}
from proposition~\ref{pro:gauss_conjugate}, we can obtain
\begin{align}
    \frac{1}{2}(-\frac{1}{\sigma_0^2} + \frac{1}{\tilde{\sigma}^2}) = \frac{1}{2 \sigma^2}, \qquad \frac{\bmf{\mu}_0}{\sigma_0^2} - \frac{\tilde{\bmf{\mu}}}{\tilde{\sigma}^2} = -\frac{\bmf{w}}{\sigma^2}, \\
    (\frac{\tilde{\sigma}^2}{\sigma_0^2})^{\frac{D+K}{2}} = (1+\frac{\sigma_0^2}{\sigma^2}))^{-\frac{D+K}{2}} \text{ is a constant in PFNM setting,} \nonumber
\end{align}
and from Bayesian theory
\begin{equation}\label{eq:baye_theory}
    p(\bmf{\theta} \vert \bmf{w}) \propto p(\bmf{\theta}) p(\bmf{w} \vert \bmf{\theta}),
\end{equation}
thus we have
\begin{align}
    \mathbb{S}(p(\bmf{\theta})) - \mathbb{S}(p(\bmf{\theta} \vert \bmf{w})) & \propto \log\frac{p(\bmf{w})}{\exp(\frac{-\bmf{\theta}^T\bmf{\theta}+2\bmf{\theta}^T\bmf{w}}{2\sigma^2})} \nonumber\\
    & \propto -\frac{\bmf{w}^T\bmf{w}}{2\sigma^2}.
\end{align}
\end{proof}
From proposition~\ref{SMS_differnce_issue}, We can know that the differences compared by each $C^{s’}_{ij}$ are only related to local neuron. As a consequence, for one fixed local neuron $\bmf{w}_{s'j}$ and two different global neurons $\bmf{\theta}_i$ and $\bmf{\theta}_{i^*}$, the cost does not discriminate global neurons $\bmf{\theta}_i$ and $\bmf{\theta}_{i^*}$. Hence, the cost specifications induced from original PFNM can't find optimal solutions for $\{ \bmf{A}^{s'} \}$.
\subsection{Kullback-Leibler Divergence}
From the above analysis, we know that the cost specification $C^{s'}_{i,j}$ induced from PFNM omits information of global neuron, i.e. the $i$ index in the cost. So, it is natural to fix the issue by adding a regularized term that contains both information of $i$ and $j$. As we point out before, the cost term $C^{s'}_{i,j}$ essentially is the standardized mean square difference between $i$-th global distribution and this distribution assigned with local neuron $w_{s'j}$. From the probabilistic perspective, minimizing the cost is meant to hope these two distributions are as close as possible. So how to measure the distance between two distributions? Kullback-Leibler becomes the first answer that gets into mind. Can KL-divergence fix the issues caused by PFNM? To answer this question, we have the following proposition:
\begin{proposition}\label{propo:KL_term}
For a global neuron distribution $p(\bmf{\theta})$, after it is assigned with a likelihood $p(\bmf{w} \vert \bmf{\theta})$, the Kullback–Leibler Divergence Penalty between $p(\bmf{\theta})$ and $p(\bmf{\theta} \vert \bmf{w})$ is equal up to $\mathbb{E}_{p(\bmf{\theta})} \bigl[ \log p(\bmf{w}\mid \bmf{\theta}) \bigr].$
\end{proposition}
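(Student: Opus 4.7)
The statement is really a standard one-line identity once Bayes' rule is applied, so my plan is to reduce the KL divergence to an expected log-likelihood plus an additive term that does not depend on $\bmf{\theta}$ (and is therefore a constant from the optimizer's point of view).

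First, I would write down the KL divergence directly from its definition,
\begin{equation*}
\mathrm{KL}\bigl(p(\bmf{\theta})\,\big\|\,p(\bmf{\theta}\mid \bmf{w})\bigr) \;=\; \mathbb{E}_{p(\bmf{\theta})}\!\left[\log \frac{p(\bmf{\theta})}{p(\bmf{\theta}\mid \bmf{w})}\right].
\end{equation*}
Then I would invoke Bayes' rule in the form already used in the paper, namely equation~(\ref{eq:baye_theory}), $p(\bmf{\theta}\mid\bmf{w}) = p(\bmf{\theta})\,p(\bmf{w}\mid\bmf{\theta})/p(\bmf{w})$, and substitute it to obtain
\begin{equation*}
\log \frac{p(\bmf{\theta})}{p(\bmf{\theta}\mid \bmf{w})} \;=\; \log p(\bmf{w}) \;-\; \log p(\bmf{w}\mid\bmf{\theta}).
\end{equation*}

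Next, I would take the expectation under $p(\bmf{\theta})$ term by term. The $\log p(\bmf{w})$ piece is independent of $\bmf{\theta}$ so it comes out of the expectation as a constant, giving
\begin{equation*}
\mathrm{KL}\bigl(p(\bmf{\theta})\,\big\|\,p(\bmf{\theta}\mid \bmf{w})\bigr) \;=\; \log p(\bmf{w}) \;-\; \mathbb{E}_{p(\bmf{\theta})}\bigl[\log p(\bmf{w}\mid\bmf{\theta})\bigr].
\end{equation*}
Since $\log p(\bmf{w})$ does not depend on the assignment or on the global neuron parameters being optimized, it can be absorbed as an additive constant, and the KL penalty is therefore equal up to that constant to $-\mathbb{E}_{p(\bmf{\theta})}[\log p(\bmf{w}\mid\bmf{\theta})]$, which is precisely the claim.

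\textbf{Obstacle.} There is essentially no technical difficulty: every step is an algebraic identity and the expectation commutes with the constant term trivially. The only subtlety worth spelling out is the interpretation of ``equal up to'': I would make explicit that $\log p(\bmf{w})$ is constant with respect to $\bmf{\theta}$ (and with respect to the choice of assignment, because the evidence depends only on the observed local neuron), which is what justifies treating the expected log-likelihood as the effective penalty that can be added into the cost $C^{s'}_{i,j}$ without altering the optimizer.
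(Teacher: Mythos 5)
Your proof follows essentially the same route as the paper's: expand the KL divergence from its definition, apply Bayes' rule $p(\bmf{\theta}\mid\bmf{w}) \propto p(\bmf{\theta})p(\bmf{w}\mid\bmf{\theta})$, and absorb $\log p(\bmf{w})$ as the additive constant hidden behind the paper's ``$\cong$''. The only difference is that you make the evidence term explicit and justify why it can be dropped, which is a clarification rather than a different argument.
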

\begin{proof}
 \begin{align}
     &  \operatorname{KL}\bigl(p(\bmf{\theta}) \Vert p(\bmf{\theta} \vert \bmf{w}) \bigr) \nonumber \\
    = & \int p(\bmf{\theta}) \log \frac{p(\bmf{\theta})}{p(\bmf{\theta} \vert \bmf{w})} \,\rm{d}\bmf{\theta}  \nonumber\\
 \end{align}
 and from Bayesian theory
\begin{equation}\label{eq:baye_theory}
    p(\bmf{\theta} \vert \bmf{w}) \propto p(\bmf{\theta}) p(\bmf{w} \vert \bmf{\theta}),
\end{equation}
so we have
\begin{align}
    &  \operatorname{KL}\bigl(p(\bmf{\theta}) \Vert p(\bmf{\theta} \vert \bmf{w}) \bigr)
    \cong  -\mathbb{E}_{p(\bmf{\theta})} \bigl[ \log p(\bmf{w}\mid \bmf{\theta}) \bigr]
\end{align}
\end{proof}
From above proposition, KL divergence between two distribution is essentially equal up to the expectation of the assigned local likelihood distribution, and the expectation is taken over the global distribution $p(\bmf{\theta})$. That means the KL divergence contains both information of local and global neuron.
\par Thus, to fix the drawback induced from PFNM,  we can formulate a new cost specifications that regularize original cost specification with the KL penalty, i.e.,  $\widetilde{\bmf{C}}^{s'}_{i,j} = $
\begin{equation}
\label{eq:kl_cost}
\begin{cases} 
\begin{split}
\bmf{C}^{s'}_{i,j} + \lambda
\operatorname{KL}\Bigl(p(\bmf{\theta}_i \vert \bmf{Z}_i^{-s'}) \big\Vert p(\bmf{\theta}_i \vert \bmf{Z}_i^{-s'}, \bmf{w}_{s'j})\Bigr),
i \leq J_{-s'}, \quad
 \end{split}\\
\begin{split}
\bmf{C}^{s'}_{i,j} + \lambda
\operatorname{KL}\Bigl(p(\bmf{\theta}_i) \big\Vert p(\bmf{\theta}_i \vert \bmf{w}_{s'j})\Bigr), & J_{-s'} < i \leq J_{-s'} + J_{s'}.
\end{split}
\end{cases}    
\end{equation}
\par where the coefficient $\lambda$ is the adjusting ratio. And the KL divergence between two multivariate normal distributions can be calculated by the following Lemma:
\begin{lemma}
The Kullback-Leibler divergence between $\mathcal{ N}_x(\bmf{\mu}_x, \bmf{\Sigma}_x)$ and $\mathcal{ N}_y(\bmf{\mu}_y, \bmf{\Sigma}_y)$, is:
\begin{equation}
    \frac{1}{2}[\operatorname{tr}(\bmf{\Sigma}_y^{-1}\bmf{\Sigma}_x) + (\bmf{\mu}_y - \bmf{\mu}_x)^T\bmf{\Sigma}_y^{-1}(\bmf{\mu}_y - \bmf{\mu}_x)-(D+K) + \ln{\frac{|\bmf{\Sigma}_y|}{|\bmf{\Sigma}_x|}}],
\end{equation}
where $D+K$ is the dimension of $\bmf{\mu}_x$ and $\bmf{\mu}_y$.
\end{lemma}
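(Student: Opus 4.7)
The plan is to evaluate $\operatorname{KL}(\mathcal{N}_x\|\mathcal{N}_y)$ directly from its definition as an expectation of a log-ratio, substitute the Gaussian log-densities in closed form, and then reduce the resulting expectations to traces via standard second-moment identities of the Gaussian. Writing the divergence as $\mathbb{E}_{\mathcal{N}_x}[\log p_x(\bmf{z}) - \log p_y(\bmf{z})]$ and plugging in the standard Gaussian log-density, the normalization constant $-\tfrac{D+K}{2}\log(2\pi)$ cancels between the two terms. What remains splits cleanly into three pieces: a log-determinant contribution $\tfrac{1}{2}\ln(|\bmf{\Sigma}_y|/|\bmf{\Sigma}_x|)$, plus $1/2$ times the $\mathcal{N}_x$-expectation of the $\bmf{\Sigma}_y^{-1}$-quadratic form, minus $1/2$ times the $\mathcal{N}_x$-expectation of the $\bmf{\Sigma}_x^{-1}$-quadratic form.

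For the $\bmf{\Sigma}_x^{-1}$-quadratic, I would apply the trace identity $\bmf{u}^T A \bmf{u} = \operatorname{tr}(A\,\bmf{u}\bmf{u}^T)$ and use $\mathbb{E}_{\mathcal{N}_x}[(\bmf{z}-\bmf{\mu}_x)(\bmf{z}-\bmf{\mu}_x)^T] = \bmf{\Sigma}_x$, which collapses the expectation to $\operatorname{tr}(\bmf{\Sigma}_x^{-1}\bmf{\Sigma}_x) = D+K$. This accounts for the $-(D+K)$ contribution appearing inside the brackets of the stated formula.

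For the $\bmf{\Sigma}_y^{-1}$-quadratic, I would decompose $\bmf{z}-\bmf{\mu}_y = (\bmf{z}-\bmf{\mu}_x) + (\bmf{\mu}_x-\bmf{\mu}_y)$ and expand. The cross term vanishes in expectation because $\mathbb{E}_{\mathcal{N}_x}[\bmf{z}-\bmf{\mu}_x] = \bmf{0}$; the two surviving terms, again via the trace trick, become $\operatorname{tr}(\bmf{\Sigma}_y^{-1}\bmf{\Sigma}_x)$ and the deterministic Mahalanobis term $(\bmf{\mu}_y-\bmf{\mu}_x)^T\bmf{\Sigma}_y^{-1}(\bmf{\mu}_y-\bmf{\mu}_x)$. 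Assembling all four contributions with the factor of $1/2$ out front recovers the stated identity exactly.

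Since this is a textbook identity, there is no real obstacle; the only points that require care are bookkeeping of signs and verifying that the cross term in the $\bmf{\Sigma}_y^{-1}$-quadratic expansion really is zero in expectation under $\mathcal{N}_x$. An alternative route would be the whitening substitution $\bmf{z} = \bmf{\Sigma}_x^{1/2}\bmf{u} + \bmf{\mu}_x$ with $\bmf{u}\sim\mathcal{N}(\bmf{0},\bmf{I})$, reducing everything to isotropic Gaussian moments, but the trace-trick derivation above is shorter and is what I would present.
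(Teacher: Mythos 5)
Your derivation is correct and complete: the expectation-of-log-ratio setup, the trace identity collapsing the $\bmf{\Sigma}_x^{-1}$ quadratic to $D+K$, and the decomposition of $\bmf{z}-\bmf{\mu}_y$ with the vanishing cross term are exactly the standard argument. The paper does not prove this lemma itself but defers to the cited derivations of \cite{duchi2007derivations}, which follow the same route you outline, so there is nothing to reconcile.
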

The proof can be found in \cite{duchi2007derivations}.
\par We call our new method as neural aggregation with full information (NAFI). And the process of NAFI can be summarized in algorithm~\ref{algori:nafi}. 
\par As demonstrated by the study in \cite{wang2020federated}, directly applying the matching algorithms fails on deep architectures designed for more complex tasks. To address this issue, we extend NAFI to a layer-wise matching scheme, which can be found in Appendix D.
\begin{algorithm}
\caption{The process of NAFI}
\label{algori:nafi}
\begin{algorithmic}[1]
    \REQUIRE ~~\\
    Local weights $\bmf{w}_{sj}$ from $S$ local models;
    \ENSURE ~~\\
    Matching assignments $\{ \bmf{A}^s \}_{s=1}^S$, global weights $\{ \bmf{\theta}_i$ \};
    \FOR{$t$ = 1, 2, 3, $\cdots$}
        \FOR{$s'$ = 1, 2, $\cdots$, $S$}
            \STATE  Giving assignments $\{ \bmf{A}^{s}_{i,j} \}_{i,j, s \in -s'}$, acquiring corresponding global model via equations~(\ref{Gauss_conj}):  $\tilde{\bmf{\theta}}_i = \frac{\bmf{\mu}_0/\sigma_0^2 + \sum_{s\in {-s'},j} \bmf{A}_{i,j}^s \bmf{w}_{sj}/\sigma_s^2}{1/\sigma_0^2 + \sum_{s\in {-s'},j} \bmf{A}_{i,j}^s/\sigma_s^2}$;
 	        \STATE Acquire the assignment cost $\{\widetilde{\bmf{C}}^{s'}_{i,j}\}_{i,j}$ via equation~(\ref{eq:kl_cost});
 	        \STATE Obtain $\{ \bmf{A}^{s'}_{i,j}  \}_{i,j}$ by solving the linear assignment problem via Hungarian algorithm.
        \ENDFOR
        \STATE Apply $\{ \bmf{A}^s \}_{s=1}^S$ to update the global model.
    \ENDFOR
\end{algorithmic}
\end{algorithm}

\section{Experiments}
To evaluate the efficiency of the proposed NAFI, we presents an empirical study of NAFI and compares it with PFNM, FedAvg \citep{mcmahan2017communication}, FedProx \citep{li2018federated} and optimal transport fusion \cite{singh2020model} (which we refer as OT fusion in remaining part). The experiments are carried out over three datasets with three types of neural networks: FCNN, shallow CNN and U-net. The experiments below indicate that our algorithm can aggregate multiple neural networks into a more efficient global neural network\footnote{https://github.com/XiaoPeng24/NAFI}. 

\subsubsection{Datasets, models and metrics}  Our algorithm is evaluated on three datasets: MINST, CIFAR 10 and Carvana Image Masking Challenge (CIMC). MINST and CIFAR-10 are image classification datasets and each contains ten classes on handwriting digits and objects in real life respectively. CIMC is a binary semantic segmentation dataset consisting of photos of cars, and the task is to split out the car and the background.  For MNIST, we use an FCNN model and evaluate it with accuracy; for CIFAR 10, we use a ConvNet with 3 convolutional and 2 fully-connected layers and evaluate it with accuracy; and for CIMC, we use the U-net \citep{ronneberger2015u} and evaluate it with dice coefficient. 

\subsubsection{Partition strategies of local data} To simulate a federated learning scenario,
we partition each dataset in a heterogeneous strategy in which the number of data points and class proportions in each local model is unbalanced. We follow prior works \cite{yurochkin2018scalable} in the heterogeneous partition of local data for three datasets, which apply $K$-dimensional Dirichlet distribution $Dir(\alpha)$ to create non-identical independent distribution (iid) data, with a smaller $\alpha$ indicating higher data heterogeneity. Specifically, for dataset with class number $K$, we sample the proportion of the instances of class $k$ to local model $s$, $p_{k,s}$, via $p_{k,s} \sim Dir_k(\alpha)$. For MNIST and CIFAR10, $K = 10$. For CIMC, $K = 1$ as it is a binary semantic segmentation dataset. In each dataset, we execute 5 trials to obtain the mean and standard deviation of the performances.

\subsubsection{Baselines}  Our method is compared to the original PFNM, FedAvg, OT fusion and FedProx. FedAvg and FedProx are executed in local neural networks that have been trained using the same random initialization as proposed by  \cite{mcmahan2017communication}. We note that while a federated averaging variant without shared initialization is likely to be more realistic when attempting to aggregate pre-trained models, it performs significantly worse than all other baselines. In CNN architectures, the original PFNM we compared is actually its deep model extensive version—FedMA \cite{wang2020federated}. Considering the high heterogeneity in each trial, we set $\lambda$ in a wide grid (from $10^{-8}$ to 1) and choose the best result for NAFI. We subsequently check the sensitivity of $\lambda$ and find it only have tiny fluctuation.

\begin{table}[htb]
\caption{Hyperparameter settings for training neural networks}
 \label{table:train_setting}
\begin{center}
 \begin{tabular}{cccc}
  \toprule
  & MNIST & CIFAR-10 & CIMC\\
  \cmidrule(r){1-4}
  Model & FCNN & ConvNet & U-net \\
  Optimizer & Adam & SGD & RMSprop \\
  Learning rate & 0.01 & 0.01 & 0.01\\ 
  Size of minibatch & 32 & 32 & 3 \\ 
 Epochs & 10 & 10 & 3 \\ 

 \bottomrule
 \end{tabular}
\end{center}
\end{table}

\begin{table*}[h]
\caption{Test accuracy of FCNN in MNIST}
 \label{table:mnist}
\begin{center}
 \begin{tabular}{ccccccccc}
  \toprule
   & S & N & Local NN & FedAvg & FedProx & OT fusion & PFNM & NAFI\\
  \midrule
   \multirow{4}{*}{nets} & 15 & 1 & 71.9 $\pm$ 2.57 & 75.47 $\pm$ 5.90 & 75.65 $\pm$ 5.93 & 81.44 $\pm$ 3.21 & 83.93 $\pm$ 0.14 & \textbf{87.34 $\pm$ 0.57}\\
   ~ & 20 & 1 & 69.44 $\pm$ 2.50 & 75.02 $\pm$ 4.03 & 75.17 $\pm$ 3.99 & 82.71 $\pm$ 3.57 & 83.23 $\pm$ 3.31 & \textbf{86.73 $\pm$ 2.15}\\
   ~ & 25 & 1 & 67.99 $\pm$ 2.00 & 75.46 $\pm$ 3.33 & 75.24 $\pm$ 3.30 & 82.53 $\pm$ 3.05 & 84.87 $\pm$ 1.66 & \textbf{86.46 $\pm$ 2.56} \\
   ~ & 30 & 1 & 65.90 $\pm$ 2.66 & 73.43 $\pm$ 4.49 & 73.11 $\pm$ 4.44 & 82.74 $\pm$ 2.41 & 83.39 $\pm$ 2.23 & \textbf{86.23 $\pm$ 2.57} \\
   \midrule
   \multirow{2}{*}{layers} & 10 & 2 & 73.21 $\pm$ 3.05 & 66.56 $\pm$ 6.82 & 66.34 $\pm$ 6.75 & 83.77 $\pm$ 4.23 & 79.09 $\pm$ 4.73 & \textbf{84.58 $\pm$ 4.54} \\
   ~ & 10 & 3  & 70.28 $\pm$ 2.97 & 52.01 $\pm$ 6.52 & 51.79 $\pm$ 6.49 & 67.56 $\pm$ 5.64 & 60.71 $\pm$ 4.59 & \textbf{72.04 $\pm$ 4.75} \\
 \bottomrule
 \end{tabular}
\end{center}
\end{table*}

\begin{table*}[h]
\caption{Test accuracy of Convnet in CIFAR10}
 \label{table:cifar10}
\begin{center}
 \begin{tabular}{cccccccc}
  \toprule
  & S  & Local NN & FedAvg & FedProx & OT fusion & PFNM (FedMA) & NAFI\\
  \midrule
   \multirow{4}{*}{nets} & 5 & 25.21 $\pm$ 2.30 & 51.26 $\pm$ 2.97 & 51.43 $\pm$ 3.01 & 52.12 $\pm$ 2.78 & 50.39 $\pm$ 0.94 & \textbf{52.56 $\pm$ 0.81} \\
   ~ & 10 & 18.92 $\pm$ 1.41 & 46.78 $\pm$ 2.36 & 46.94 $\pm$ 2.32 & 45.43 $\pm$ 3.45 & 46.27 $\pm$ 1.73 & \textbf{47.46 $\pm$ 1.42} \\
   ~ & 15 & 15.85 $\pm$ 0.51 & 42.40 $\pm$ 2.25 & 42.06 $\pm$ 2.20 & 40.79 $\pm$ 4.29 & 42.82 $\pm$ 2.46 & \textbf{45.72 $\pm$ 1.35} \\
   ~ & 20 & 14.11 $\pm$ 0.53 & 34.20 $\pm$ 2.54 & 34.02 $\pm$ 2.52 & 37.56 $\pm$ 4.13 & 42.61 $\pm$ 2.07 & \textbf{44.96 $\pm$ 1.93} \\
 \bottomrule
 \end{tabular}
\end{center}
\end{table*}

\begin{table*}[htb]
\caption{Test dice coefficient of U-net in CIMC}
 \label{table:cimc}
\begin{center}
 \begin{tabular}{cccccccc}
  \toprule
  & S & Local NN & FedAvg & FedProx & OT fusion & PFNM (FedMA) & NAFI\\
  \midrule
   \multirow{2}{*}{nets} & 8 & 67.60 $\pm$ 9.38 & 53.28 $\pm$ 10.63 & 41.53 $\pm$ 0.82 & 63.57 $\pm$ 5.48 & 90.31 $\pm$ 2.90  & \textbf{96.47 $\pm$ 0.76} \\
   ~ & 16 & 27.84 $\pm$ 12.16 & 42.62 $\pm$ 0.87 & 48.50 $\pm$ 9.52 & 54.49 $\pm$ 5.62 & 75.47 $\pm$ 3.54  & \textbf{83.46 $\pm$ 3.41}
   \\
 \bottomrule
 \end{tabular}
\end{center}
\end{table*}

\subsubsection{Training setup}  We use PyTorch \citep{paszke2017automatic} to implement these networks and train them by the Adam  \citep{kingma2014adam}, SGD \citep{bottou2010large} and RMSprop \citep{hinton2012neural} with hyperparameter settings which are summarized in Supplement table~\ref{table:train_setting}.

\subsubsection{Performance Overview} 
 In real world applications of Federated Learning, the discrepancy of data distribution among local models and communication cost will inevitably increase as the number of local models increases. This is mainly due to the variability of data generation paradigms in the system \citep{li2020federated}. This suggests that testing fusing algorithms for various numbers of local models $S$ is important.  For MNIST, we firstly apply various baselines with 15, 20, 25 and 30 local models with one hidden layer FCNN. Local NN in Table~\ref{table:mnist} reports the average of separately tested network accuracies. The lower extremes of aggregating are determined by the performance of local NNs. As shown in Table~\ref{table:mnist}, accuracy of NAFI on MNIST is 4\% higher than PFNM on average. We also examine how the performance of all methods is impacted by the number of hidden layers $N$. We train 10 neural networks with 2 and 3 hidden layers respectively and then fuse them using various baselines. The performance of PFNM decreases with the increasing number of hidden layers—even worse than OT fusion. This phenomenon has already been proposed in \cite{wang2020federated}, and it can be fixed by a layer-wise training manner which we apply in CNN architectures. It is worth mentioning that NAFI seems have the potential to address the drawback of PFNM in deep architecture, as NAFI still maintain at a moderate level in deep architecture. This may be explained by an accumulating error effect. From above analysis, we discovered that PFNM is unable to discriminate those global neurons because the cost specification does not include global neuron information, while NAFI equipped with KL divergence can correct this. Thus, with the fusing process of neural networks is going layer-by-layer, the drawback of PFNM described above takes effect, and global neurons are not generated correctly. As a result, the incorrectness of PFNM in picking global neurons is superimposed, and the performance disparity between PFNM and NAFI is amplified as presented in Table~\ref{table:mnist}.

\par For CNN application, we apply ConvNet (2 convolutional layers and 3 fully connected layers) on 5, 10, 15, and 20 local models and use various methods to fuse the models trained in CIFAR 10's heterogeneous dataset partition. We apply PFNM and NAFI in an iteratively layer-wise way mentioned above, so we also set the communications rounds of FedAvg, FedProx and OT fusion  to 5 equal to the number of layers in ConvNet for equality. As shown in Table~\ref{table:cifar10}, NAFI outperforms the other methods on fusing convolutional neural networks. Concretely, the accuracy of NAFI on CIFAR 10 is 3\% higher than PFNM on average for a various number of local models.

\par U-Net \citep{ronneberger2015u} is widely used on medical image segmentation \citep{du2020medical}, and medical images such as tumor scan images generated by multiple institutions are generally forbidden to be exchanged for privacy issues. Therefore, it is critical to federate well-known image segmentation algorithms. To this end, we compare our method and baselines on U-Net. 
The U-Net architecture is described in \cite{ronneberger2015u}, and it employs the batch normalization layer, which is not extended by the matching type federated learning algorithm yet. Here we apply a technique which fuses batch normalization layer into convolutional layer, and extend the matching type algorithm to batch normalization layer. The details can be referred in Appendix E.
As shown in Table~\ref{table:cimc}, FedAvg, FedProx and OT fusion have very subpar performances. And FedAvg and FedProx even cannot converge to a stationary result in such complex network architecture (the results go up and down during communication rounds). The result shown in the table is taken from the best result during all 19 communication rounds which are equal to the number of layers in U-net.
As shown in Table~\ref{table:cimc}, the priority of NAFI is significantly more important among popular Federated Learning methods than it is in the case of 2- or 3-layer neural networks. With 8 local models, NAFI on CIMC has accuracy that is at least 7\% higher than PFNM, and accuracy that is almost 8\% higher with 16 local models.

\subsubsection{Amount of fused global model} In real federated application, the amount of fused global model determines the burden of communication. Hence we also measure the size of global model fused by PFNM and NAFI respectively. In figure~\ref{fig4}, we plot the logarithm ratio between the amount of fused global and the total amount of all local models $\log{\frac{J}{\sum_s{J_s}}}$ in FCNN multiple nets scenario. As shown in figure~\ref{fig4}, the logarithm ratio decrease with the increase in number of local models and is kept lower than -0.5, which means the matching type method produces a relative small model while can capture the most advantages of local models. It is worth noting that the ratio of NAFI is always small than PFNM, while the performance of fused global model in NAFI outperforms than PFNM accordint to above experiment results. This indicates the availability of NAFI in some model compression tasks.

\begin{figure}[htb]
\centering
\includegraphics[width=0.5\textwidth]{./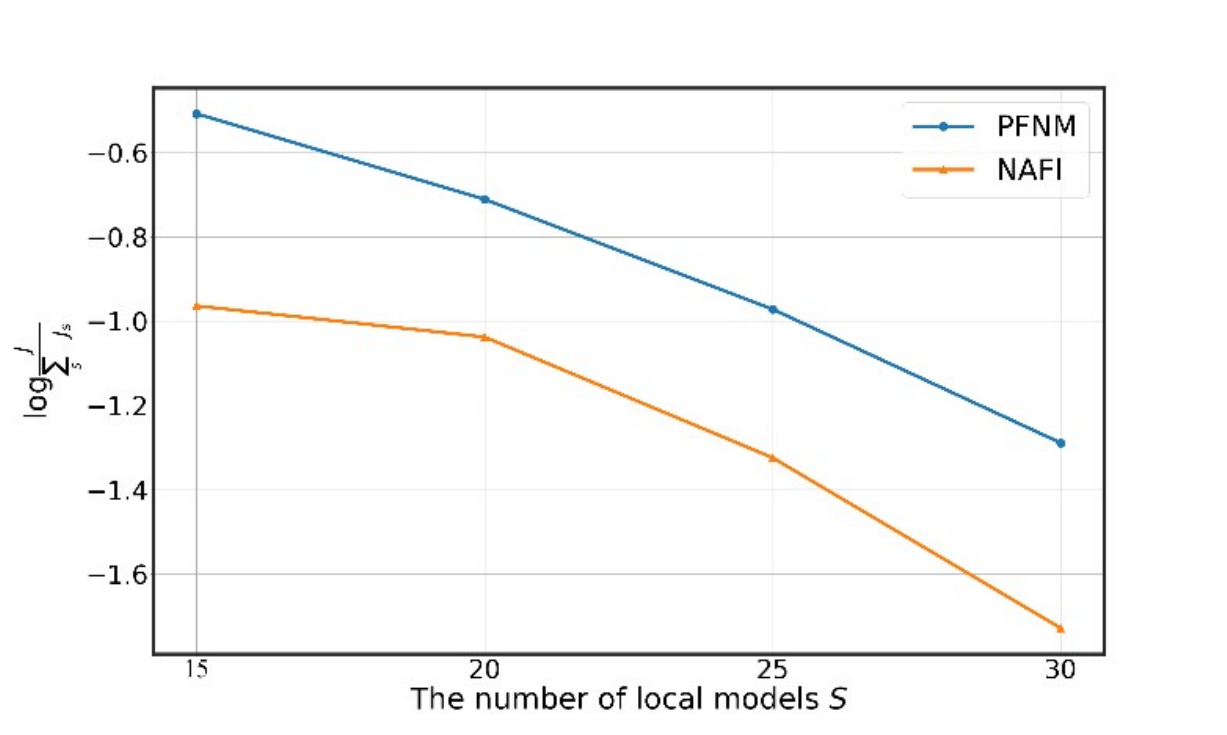}
\centering
\caption{\textbf{The logarithm ratio between the amount of fused global and the total amount of all local models in FCNN multiple nets scenario.}}
\label{fig4}
\end{figure}

\subsection{Sensitivity Analysis}
\par The weight of the KL-divergence term, $\lambda$, is a hyper-parameter that we introduce into NAFI. Therefore, it is essential to demonstrate the sensitivity of $\lambda$. Here, we set $\lambda$ to various positive values for NAFI applied on fusing FCNNs and CNNs trained in MNIST and CIFAR10 respectively. We also take the impact of the number of local models into account.  As shown in figure~\ref{fig3}, the heat map indicates the accuracy on the
test data, and for various number of clients, there is only a tiny fluctuation of prediction accuracy for a fused global model when $10^{-8} \leq \lambda \leq 1$. Although NAFI with $\lambda = 10^{-8}$ performs significantly worse than NAFI with $10^{-3} \leq \lambda \leq 0.5$,  the fused global model maintains a high level of performance.  When $\lambda = 1$, the performance of the fused global model drops sharply, this indicates that $\lambda$ should not set too high.  To summarize, NAFI is robust on the hyperparameter $\lambda$ under a variety of conditions. 
\begin{figure*}[htb]
\centering
\includegraphics[width=1\textwidth]{./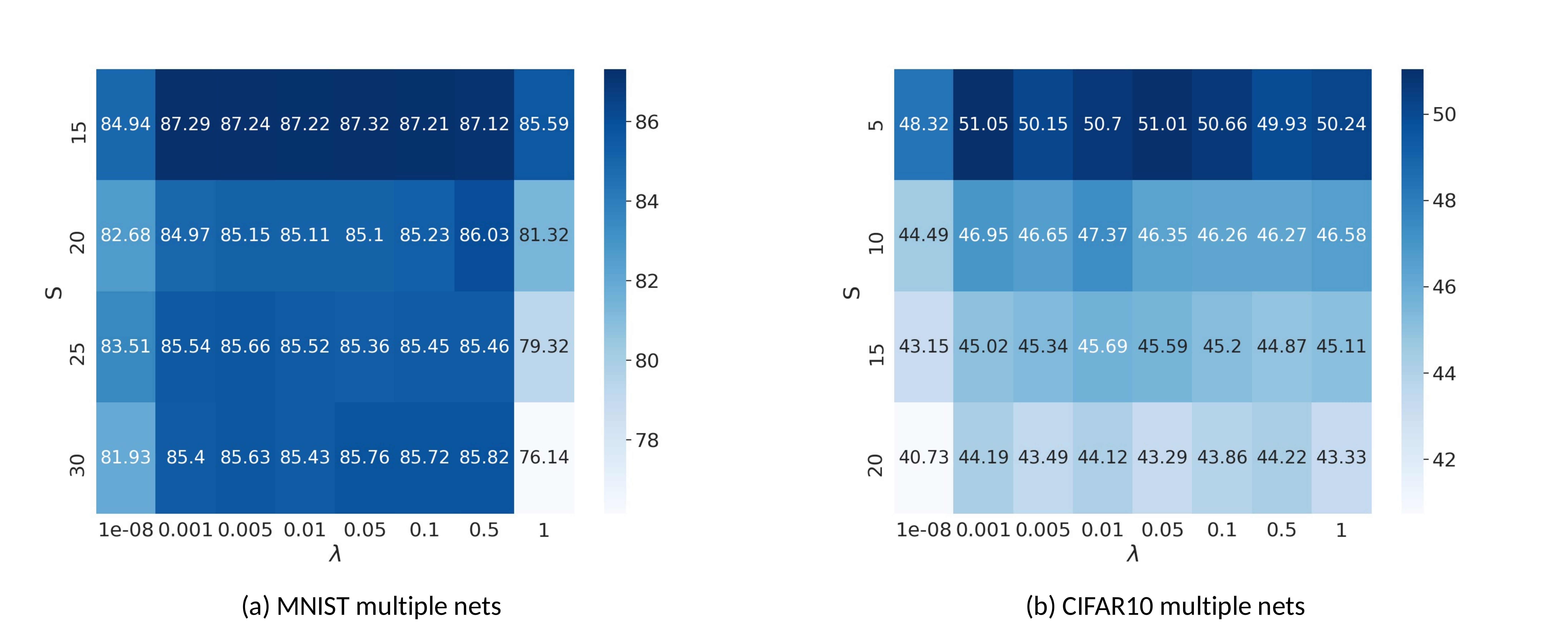}
\centering
\caption{\textbf{KL regularization coefficients sensitivity analysis}}
\label{fig3}
\end{figure*}

\section{Conclusion}
\label{conclusion}
In this study, we theoretically analyze the flaw of PFNM and propose a new federated neural matching method to fix this flaw. It is empirically shown that the new method outperforms other state-of-the-art algorithms for federated learning of neural networks, and the comparison of fused global model size also indicates the potential availability of our method in model compression and pruning tasks. 
In future work, we will extend our algorithm into more advanced architectures, like transformers etc. Additionally, the success of KL-divergence suggests it is also likely to try other metrics which can measure the similarity between two probability distributions.

\section*{Acknowledgements} 
During the process of preparing this paper, the FIFA World Cup Qatar 2022 is underwaying. The Argentina football player Lionel Messi and his Argentina men's football team have also successfully qualified for the group stage. For a long time, Messi has maintained a brilliant and lasting career through his own efforts and self-discipline. This kind of hard work and self-discipline also deeply moved the author, and Messi also become the spiritual mentor of the author during his career path. 
Lack of a World Cup trophy is the only barrier for Messi being the world's number one football player. In 2014, Messi regretted passing it by. Here, the author wishes Messi to win the World Cup trophy in this year through his effort. And all of us should believe that God rewards those who work hard. Let's work hard for our goals.

\clearpage
\bibliography{references}

\newpage

\onecolumn
\appendix

\section{Appendix A: BBP and IBP}
\label{Appendix2}
\textbf{Beta-Bernoulli Process and Indian Buffet Process}\quad Denote $Q$ as a random measure drawn from a Beta process: $Q | \gamma_0, H \sim \text{BP}(1, \gamma_0H)$, where $\gamma_0$ is the mass parameter, $H$ is the base measure over some domain $\Omega$ such that $H(\Omega) = 1$.
 One can show that $Q$ is a discrete measure with $Q=\sum_{i} q_i \delta_{\bmf{\theta}_{i}}$, which can be characterized by an infinitely countable set of (weight, atom) pairs $( q_i, \bmf{\theta}_i ) \in [0,1] \times \Omega$. The atoms $\bmf{\theta}_i$ can be drawn i.i.d from $H$ and the weights $\{q_i\}_{i=1}^{\infty}$ can be generated via a stick-breaking process \cite{teh2007stick}: $q_1 \sim \operatorname{Beta}(\gamma_0, 1), q_i = \prod_{g=1}^i q_g$. 
Then subsets of atoms in the random measure $Q$ can be picked via a Bernoulli process. That is, each subset $\mathcal{T}_s$ for $s = 1,\cdots, S$ can be distributed via a Bernoulli process with base measure $Q$: $\mathcal{T}_s | Q \sim \operatorname{BeP}(Q)$. Hence, subset $\mathcal{T}_s$ can also be viewed as a discrete measure $\mathcal{T}_s :=\sum_i a_{si}\delta_{\bmf{\theta}_i}$, which is formed by pairs $(a_{si}, \bmf{\theta}_i) \in \{0,1\} \times \Omega$, where $a_{si} | q_i \sim \operatorname{Bernoulli}(q_i), \forall i$ is a binary random variable indicating whether $\bmf{\theta}_i$ belongs to subset $\mathcal{T}_s$. We call such collection of subsets a Beta-Bernoulli process \cite{thibaux2007hierarchical}.

The Indian buffet process (IBP) specifies distribution on sparse binary matrices  \cite{ghahramani2006infinite}. IBP involves a metaphor of a sequence of customers tasting dishes in an infinite buffet: the first customer tastes $\operatorname{Poisson}(\gamma_0)$ dishes, every  subsequent  $s$th customer tastes each dish that is previously selected with probability $n_i / s$,  where $n_i = \sum_{s=1}^{S-1} a_{si}$, and then tastes  $\operatorname{Poisson}(\gamma_0 / s)$ new dishes. 
Marginalizing over Beta Process distributed $Q$ above will induce dependencies among subsets and recover the predictive distribution $\mathcal{T}_S | \mathcal{T}_1,\cdots,\mathcal{T}_{S-1} \sim \operatorname{BeP}(H\frac{\gamma_0}{S} + \sum_{i}\frac{n_i}{S}\delta_{\bmf{\theta}_i})$. That is equivalent to the IBP.

\section{Appendix B: Proof of Proposition 1 \cite{yurochkin2019bayesian}}
PFNM maximizes a posterior probability of the global atoms$\{\bmf{\theta}_i\}_{i=1}^\infty$ and assignments of observed neural network weight estimates to global atoms$\{ \bm{A}^s \}_{s=1}^S$.  Given estimates of the local model weights $\{ \bmf{w}_{sj} \text{ for } j = 1, \dots, J_s \}_{s=1}^S$, it has:
\begin{equation}
\label{eq16}
\max_{\{ \bmf{\theta}_i \}, \{ \bm{A}^s \} } P(\{ \bmf{\theta}_i\}, \{\bm{A}^s\} | \{\bmf{w}_{sj}\}) \\\propto P(\{\bmf{w}_{sj}\} | \{\bmf{\theta}_i\}, \{\bm{A}^s\} )P(\{\bm{A}^s\})P(\{\bmf{\theta}_i\}),
\end{equation}
by taking negative natural logarithm it can obtain:
\begin{equation}
\label{eq17}
\min_{\{ \bmf{\theta}_i \}, \{ \bm{A}^s \} } -\sum_i \bigg ( \sum_{s,j} \bmf{A}^s_{i,j} \log(p(\bmf{w}_{sj} | \sim \bmf{\theta}_i)) + \log(q(\bmf{\theta}_i)) \bigg ) \\
 - \log(P(\{ \bm{A}^s \})),
\end{equation}
expand probability function of multi-dimensional Gaussian distributions in equation~(\ref{eq17}), it obtains:
\begin{equation}
\label{eq18}
\min_{\{ \bmf{\theta}_i \}, \{ \bm{A}^s \} } \frac{1}{2} \sum_{i}\Bigg( \frac{|| \hat{\bmf{\theta}}_i - \bmf{\mu}_0 ||^2}{\sigma_0^2} + (D+K)\log(2\pi\sigma_0^2 ) \\
+ \sum_{s,j}A^s_{i,j} \frac{||\bmf{w}_{sj} - \hat{\bmf{\theta}}_i||^2}{\sigma_s^2}\Bigg)
 - \log(P(\{ \bm{A}^s \})).
\end{equation}
We now consider the first part of equation~(\ref{eq18}). Through the closed-form expression of $\{\bmf{\theta}_i\}$ estimated according to the Gaussian-Gaussian conjugacy:
\begin{equation}
\label{eq19}
\hat{\bmf{\theta}_i} =  \frac{\bmf{\mu}_0 / \sigma_0^2 + \sum_{s,j}A_{i,j}^s \bmf{w}_{sj} / \sigma^2_s}{1 / \sigma_0^2 + \sum_{s,j}A^s_{i,j} / \sigma^2_s} \text{ for } i = 1, ..., J,
\end{equation}
where for simplicity we assume $\bm{\Sigma}_0 = \bm{I} \sigma^2_0$ and $\bm{\Sigma}_s = \bm{I}\sigma^2_s$, we can now cast first part of equation~(\ref{eq18}) with respect only to $\{ \bm{A}^s \}_{s=1}^S$:
\begingroup
\allowdisplaybreaks
\begin{equation}
\label{eq20}
    \begin{aligned}
 		& \frac{1}{2} \sum_{i}\Bigg( \frac{|| \hat{\bmf{\theta}}_i - \bmf{\mu}_0 ||^2}{\sigma_0^2} + (D+K)\log(2\pi\sigma_0^2 ) + \sum_{s,j}A^s_{i,j} \frac{||\bmf{w}_{sj} - \hat{\bmf{\theta}}_i||^2}{\sigma_s^2} \Bigg) \\
 		 \cong &\frac{1}{2} \sum_{i}\Bigg( \langle \hat{\bmf{\theta}}_i, \hat{\bmf{\theta}}_i \rangle(\frac{1}{\sigma_0^2} + \sum_{s,j}\frac{A^s_{i,j}}{\sigma^2_s}) + (D+K)\log(2\pi\sigma_0^2 )  -  2\langle \hat{\bmf{\theta}}_i, \sum_{s,j}A^s_{i,j} \frac{\bmf{w}_{sj}}{\sigma_s^2}) \rangle\Bigg) \\
 		  = &-\frac{1}{2} \sum_{i}\Bigg(\frac{|| \sum\limits_{s,j} \bmf{A}^{s}_{i,j}\frac{\bmf{w}_{sj} - \bmf{\mu}_0}{\sigma_{s}^2}||^2}{( 1 / \sigma_0^2 + \sum\limits_{s,j} \bmf{A}^{s}_{i,j} / \sigma_{s}^2} - (D+K)\log(2\pi\sigma_0^2 )\Bigg). \\
    \end{aligned}
\end{equation}
\endgroup
Partition equation~(\ref{eq20}) between $i = 1,..., J_{-s'}$ and $i =J_{-s'} + 1,...,J_{-s'} + J_{s'}$, and because it is now solving for $ \bm{A}^{s'}$, it can subtract terms independent of $\bm{A}^{s'}$:
\begingroup
\allowdisplaybreaks
\begin{equation}
\label{eq21}
 	\begin{aligned}
 		& \sum_{i}\Bigg(\frac{|| \sum_{s,j} \bmf{A}^{s}_{i,j}\frac{\bmf{w}_{sj} - \bmf{\mu}_0}{\sigma_{s}^2}||^2}{( 1 / \sigma_0^2 + \sum_{s,j} \bmf{A}^{s}_{i,j} / \sigma_{s}^2} - (D+K)\log(2\pi\sigma_0^2 )\Bigg) \\
        \cong & \sum_{i=1}^{J_{-s'}} \Bigg(\frac{|| \sum_j \bmf{A}^{s'}_{i,j}\frac{\bmf{w}_{s'j} - \bmf{\mu}_0}{\sigma_{s'}^2} + \sum_{s \in -s',j} \bmf{A}^s_{i,j}\frac{\bmf{w}_{sj} - \bmf{\mu}_0}{\sigma_s^2} ||^2}{1 / \sigma_0^2 + \sum_{j} \bmf{A}^{s'}_{i,j} / \sigma_{s'}^2 + \sum_{s \in -s',j} \bmf{A}^s_{i,j} / \sigma^2_s} - \frac{||\sum_{s \in -s',j} \bmf{A}^s_{i,j}\frac{\bmf{w}_{sj} - \bmf{\mu}_0}{\sigma_s^2} ||^2}{1 / \sigma_0^2 + \sum_{s \in -s',j} \bmf{A}^s_{i,j} / \sigma^2_s } \Bigg) \\
        & \quad  + \sum_{i=J_{-s'}+1}^{J_{-s'} + J_{s'}} \Bigg(\frac{|| \sum_j \bmf{A}^{s'}_{i,j}\frac{\bmf{w}_{s'j} - \bmf{\mu}_0}{\sigma_{s'}^2} ||^2}{1 / \sigma_0^2 + \sum_j \bmf{A}^{s'}_{i,j} / \sigma^2_{s'}}\Bigg),
 	\end{aligned}
\end{equation}
\endgroup
observe that $\sum_j \bmf{A}^{s'}_{i,j} \in \{ 0,1 \}$, i.e. it is 1 if some neuron from dataset $s'$ is matched to global neuron $i$ and 0 otherwise. Thus equation~(\ref{eq21}) can rewritten as a linear sum assignment problem:
\begingroup
\allowdisplaybreaks
\begin{equation}
\label{eq22}
	\begin{aligned}
	  &\sum_{i=1}^{J_{-s'}} \sum_{j=1}^{J_{s'}} \bmf{A}^{s'}_{i,j} \Bigg(\frac{||\frac{\bmf{w}_{s'j} - \bmf{\mu}_0}{\sigma_{s'}^2} + \sum_{s \in -s',j} \bmf{A}^s_{i,j}\frac{\bmf{w}_{sj} - \bmf{\mu}_0}{\sigma_s^2} ||^2}{1 / \sigma_0^2 + 1 / \sigma_{s'}^2 + \sum_{s \in -s',j} \bmf{A}^s_{i,j} / \sigma^2_s} - \frac{||\sum_{s \in -s',j} \bmf{A}^s_{i,j}\frac{\bmf{w}_{sj} - \bmf{\mu}_0}{\sigma_s^2} ||^2}{1 / \sigma_0^2 + \sum_{s \in -s',j} \bmf{A}^s_{i,j} / \sigma^2_s}\Bigg) \\
	   & \quad +  \sum_{i=J_{-s'}+1}^{J_{-s'} + J_{s'}} \sum_{j=1}^{J_{s'}} \bmf{A}^{s'}_{i,j} \Bigg( \frac{||\frac{\bmf{w}_{s'j} - \bmf{\mu}_0}{\sigma_{s'}^2} ||^2}{1 / \sigma_0^2 + \sum_j \bmf{A}^{s'}_{i,j} / \sigma^2_{s'}}\Bigg).
	\end{aligned}
\end{equation}
\endgroup
Then consider the second term of equation~(\ref{eq18}), by subtracting terms independent of $\bm{A}^{s'}$ it has:
\begin{equation}
\label{eq23}
	\log(P(\bm{A}^{s'})) = \log(P(\bm{A}^{s'} | \bm{A}^{-s'})) + \log(P(\bm{A}^{-s'})).
\end{equation}
First, it can ignore $\log(P(\bm{A}^{-s'}))$ since now are optimizing for $\bm{A}^{s'}$ . Second, due to exchange ability of datasets
(i.e. customers of the IBP), $\bm{A}^{s'}$ can always be treated
as the last customer of the IBP. Denote
$n^{-s'}_i = \sum_{-s',j} \bmf{A}^{s'}_{i,j}$ as the number of times local weights were assigned to global atom $i$ outside of group $s'$. Now it can obtain the following:
\begingroup
\allowdisplaybreaks
\begin{equation}
\label{eq24}
\begin{aligned}
  \log P(\bm{A}^{s'} | \bm{A}^{-s'}) & \cong \sum_{i=1}^{J_{-s'}} \Bigg(\bigg(\sum_{j=1}^{J_{s'}} \bmf{A}^{s'}_{i,j}\bigg)\log\frac{n^{-s'}_i}{S} + \bigg(1 - \sum_{j=1}^{J_{s'}} \bmf{A}^{s'}_{i,j}\bigg)\log\frac{S - n^{-s'}_i}{S}\Bigg)  \\
	& \qquad -  \log\Bigg(\sum_{i=J_{-s'}+1}^{J_{-s'} + J_{s'}} \sum_{j=1}^{J_{s'}} \bmf{A}^{s'}_{i,j}\Bigg) + \Bigg(\sum_{i=J_{-s'}+1}^{J_{-s'} + J_{s'}} \sum_{j=1}^{J_{s'}} \bmf{A}^{s'}_{i,j}\Bigg)\log\frac{\gamma_0}{J}.
\end{aligned}
\end{equation}
\endgroup
equation~(\ref{eq24}) thus can be rearranged as a linear sum assignment problem:
\label{eq25}
\begin{equation}
   \sum_{i=1}^{J_{-s'}} \sum_{j=1}^{J_{s'}} \bmf{A}^{s'}_{i,j}\log\frac{ n^{-s'}_i}{S - n^{-s'}_i}  +  \sum_{i=J_{-s'}+1}^{J_{-s'} + J_{s'}} \sum_{j=1}^{J_{s'}} \bmf{A}^{s'}_{i,j}\Bigg(\log\frac{\gamma_0}{S} - \log(i - J_{-s'}) \Bigg).
\end{equation}
Combining equation~(\ref{eq22}) and equation~(\ref{eq25}), we arrive at the cost specification shown in (6) of the main text. That completes the proof of Proposition 1 in the main text.

\section{Appendix C: Proof of Proposition 2}
\begin{proof}
i) Let's prove the first part. From Bayesian theory, the posterior is given by
  \begin{align}
      p(\bmf{\theta}_i \vert \bmf{w}_{sj}) & \propto p(\bmf{\theta}_i)p(\bmf{w}_{sj}) \\
      & \propto \exp{(\frac{-\bmf{\theta}_i^T\bmf{\theta}_i+2\bmf{\theta}_i^T\bmf{\mu}_0-\bmf{\mu}_0^T\bmf{\mu}_0}{2\sigma_0^2})} \\
      &  \exp{(\frac{-\bmf{w}_{sj}^T\bmf{w}_{sj}+2\bmf{\theta}_i^T\bmf{w}_{sj}-\bmf{\theta}_i^T\bmf{\theta}_i}{2\sigma_s^2})}. \\
  \end{align}
  From Gaussian conjugate, the product of two Gaussians is still a Gaussian, we will rewrite this in the form
  \begin{align}
      p(\bmf{\theta}_i \vert \bmf{w}_{sj}) \propto 
      &  \exp{\bigl[-\frac{\bmf{\theta}_i^T\bmf{\theta}_i}{2}(\frac{1}{\sigma_0^2} + \frac{1}{\sigma_s^2})+\bmf{\theta}_i^T(\frac{\bmf{\mu}_0}{\sigma_0^2}+\frac{\bmf{w}_{sj}}{\sigma_s^2}) - (\frac{\bmf{\mu}_0^T\bmf{\mu}_0}{2\sigma_0^2}+\frac{\bmf{w}_{sj}^T\bmf{w}_{sj}}{2\sigma_s^2})\bigr]} \\
      & \mathop{=}\limits^{def} \exp{[\frac{-\bmf{\theta}_i^T\bmf{\theta}_i+2\bmf{\theta}_i^T\bmf{\hat{\mu}}-\bmf{\hat{\mu}}^T\bmf{\hat{\mu}}}{2\hat{\sigma}^2}]}
      = \exp{[-\frac{1}{2\hat{\sigma}^2}(\norm{\bmf{\theta}_i - \bmf{\hat{\mu}}}^2)]},
  \end{align}
  where $\bmf{\hat{\mu}}$ and $\hat{\sigma}^2$ denote the mean and variance of the posterior Gaussian. By completing the square, we first match the coefficients of $\bmf{\theta}_i^T\bmf{\theta}_i$---second power of $\bmf{\theta}_i$, and find $\hat{\sigma}^2$ is given by
  \begin{align}
      \frac{1}{\hat{\sigma}^2} & = \frac{1}{\sigma_0^2} + \frac{1}{\sigma_s^2},\\
      \hat{\sigma}^2 &= \frac{1}{\frac{1}{\sigma_0^2} + \frac{1}{\sigma^2_{s}}},
  \end{align}
  and then match the coefficients of first power $\bmf{\theta}_i^T$ we get
  \begin{align}
      \frac{\bmf{\hat{\mu}}}{\hat{\sigma}^2} &= \frac{\bmf{\mu}_0}{\sigma_0^2}+\frac{\bmf{w}_{sj}}{\sigma_s^2},
  \end{align}
  hence
  \begin{align}
      \bmf{\hat{\mu}} = \hat{\sigma}^2(\frac{\bmf{\mu}_0}{\sigma_0^2}+\frac{\bmf{w}_{sj}}{\sigma_s^2}).
  \end{align}
  ii) Now let's prove the second part. When $\bmf{\theta}_i$ has been assigned local neurons from other assignments $\bmf{Z}_i^{-s'} = \{(s, j) \vert A_{ij}^s = 1, s \in -s'\}$,from Bayesian theory, the posterior is given by
  \begin{align}
      p(\bmf{\theta}_i \vert \bmf{Z}_i^{-s'}) & \propto p(\bmf{\theta}_i)\prod_{z\in \bmf{Z}_i^{-s'}}p(\bmf{w}_{z}) \\
      & \propto \exp{(\frac{-\bmf{\theta}_i^T\bmf{\theta}_i+2\bmf{\theta}_i^T\bmf{\mu}_0-\bmf{\mu}_0^T\bmf{\mu}_0}{2\sigma_0^2})} \\
      &  \exp{(-\sum_{s\in {-s'},j} A_{i,j}^s\frac{\bmf{w}_{sj}^T\bmf{w}_{sj}}{\sigma_s^2}  +  \bmf{\theta}_i^T\sum_{s\in {-s'},j} A_{i,j}^s\frac{\bmf{w}_{sj}}{\sigma_s^2}  -  \bmf{\theta}_i^T\bmf{\theta}_i \sum_{s\in {-s'},j} A_{i,j}^s\frac{1}{2\sigma_s^2}
      )}.
  \end{align}
  Since the product of two Gaussians is a Gaussian, we will rewrite this in the form
  \begin{align}
      p(\bmf{\theta}_i \vert \bmf{Z}_i^{-s'}) & \propto \exp{\bigl[-\frac{\bmf{\theta}_i^T\bmf{\theta}_i}{2}(\frac{1}{\sigma_0^2} + \sum_{s\in {-s'},j} A_{i,j}^s\frac{1}{\sigma_s^2}) +\bmf{\theta}_i^T(\frac{\bmf{\mu}_0}{\sigma_0^2} + \sum_{s\in {-s'},j} A_{i,j}^s\frac{\bmf{w}_{sj}}{\sigma_s^2}) - (\frac{\bmf{\mu}_0^T\bmf{\mu}_0}{2\sigma_0^2} + \sum_{s\in {-s'},j} A_{i,j}^s\frac{\bmf{w}_{sj}^T\bmf{w}_{sj}}{2\sigma_s^2})\bigr]} \\
      & \mathop{=}\limits^{def} \exp{(\frac{-\bmf{\theta}_i^T\bmf{\theta}_i+2\bmf{\theta}_i^T\bmf{\hat{\mu}}-\bmf{\hat{\mu}}^T\bmf{\hat{\mu}}}{2\hat{\sigma}^2})}
      = \exp{[-\frac{1}{2\hat{\sigma}^2}(\norm{\bmf{\theta}_i - \bmf{\hat{\mu}}}^2)]},
  \end{align}
  where $\bmf{\hat{\mu}}$ and $\hat{\sigma}^2$ denote the mean and variance of the posterior Gaussian. By completing the square, we first match the coefficients of $\bmf{\theta}_i^T\bmf{\theta}_i$---second power of $\bmf{\theta}_i$, and find $\hat{\sigma}^2$ is given by
  \begin{align}
      \frac{1}{\hat{\sigma}^2} & = \frac{1}{\sigma_0^2} + \sum_{s\in {-s'},j} A_{i,j}^s\frac{1}{\sigma_s^2},\\
      \hat{\sigma}^2 &= \frac{1}{\frac{1}{\sigma_0^2} + \sum_{s\in {-s'},j} A_{i,j}^s\frac{1}{\sigma_s^2}},
  \end{align}
  and then match the coefficients of first power $\bmf{\theta}_i^T$ we get
  \begin{align}
      \frac{\bmf{\hat{\mu}}}{\hat{\sigma}^2} &= \frac{\bmf{\mu}_0}{\sigma_0^2} + \sum_{s\in {-s'},j} A_{i,j}^s\frac{\bmf{w}_{sj}}{\sigma_s^2},
  \end{align}
  hence
  \begin{align}
      \bmf{\hat{\mu}} = \hat{\sigma}^2(\frac{\bmf{\mu}_0}{\sigma_0^2} + \sum_{s\in {-s'},j} A_{i,j}^s\frac{\bmf{w}_{sj}}{\sigma_s^2}).
  \end{align}
\end{proof}

\section{Appendix D: Iteratively Layer-wise Matched Aggregation via NAFI}
As the empirical study in \cite{wang2020federated} demonstrates, directly applying the matching algorithms  fails on deep architectures which are necessary to solve more complex tasks. Thus to alleviate this problem ,
we also extend NAFI to the following layer-wise matching scheme. Firstly, the server only collects the first layer weights from the clients and applies NAFI
to acquire the first layer weights of the federated model. Then, the server broadcasts these weights
to the clients, which proceed to train all consecutive layers on their datasets while keeping the matched layers frozen. Repeat this process until the last layer, where we make a weighted average based on class proportions of each client's data points.  We summarize our
 layer-wise version of NAFI in Algorithm 2. The layer-wise approach requires communication
rounds that equal to the number of layers in a neural network. Experimental results show that with layer-wise
matching, NAFI performs well on the ConvNets even for U-nets which has a complex architecture. In the more challenging
heterogeneous setting, NAFI outperforms FedAvg, FedProx trained with same number of
communication rounds (5 for ConvNet and 19 for U-net).
\begin{algorithm}
\caption{Iteratively Layer-wise NAFI}
\label{alg2}
\begin{algorithmic}[1]
    \REQUIRE ~~\\
    Collected local weights of $N$-layer architectures $\{ \bmf{W}_s^{(0)}, \cdots, \bmf{W}_s^{(N-1)} \}_{s=1}^{S}$ from $S$ clients;
    \ENSURE ~~\\
    New constructed global weights $\{ \bmf{W}^{(0)}, \cdots, \bmf{W}^{(N-1)} \}$.
 	\STATE $n = 0$;
 	\WHILE{layers $ n \leq N$}
 	    \IF{n < N-1}
 		    \STATE $\{A_s \}_{s=1}^S = \operatorname{NAFI}(\{ \bmf{W}_s^{(n)} \}_{s=1}^S)$;
 		    \STATE $\bmf{W}^{(n)} = \frac{1}{S}\sum_s \bmf{W}_s^{(n)}A^T_s$;
 		\ELSE 
 		    \STATE $\bmf{W}^{(n)} = \sum_s \bmf{p_{s}} \bmf{\cdot} \bmf{W}^{(n)}_s $ where $\bmf{p_{s}}$ is vector of fraction of data points with each label on worker $s$, and $\bmf{\cdot}$ denotes the dot product;
 		  \ENDIF
 		  \FOR{ $s \in \{ 1, \cdots,S \}$ }
 		    \STATE $\bmf{W}_s^{(n+1)} = A_s \bmf{W}_s^{(n+1)}$;
 		    \STATE Train $\{ \bmf{W}_s^{(n+1)},\cdots, \bmf{W}_s^{(N-1)}\}$ with $\bmf{W}_s^{(n)}$ frozen;
 		  \ENDFOR
 	    \STATE $n = n+1$
 	\ENDWHILE
\end{algorithmic}
\end{algorithm}

\section{Appendix E: Extending federated matching algorithm to Batch Normalization Layer}
\label{appendix4}
Although \cite{wang2020federated} shows how to apply the PFNM to CNNs, it doesn't enable additional deep learning building blocks, e.g., batch normalization layer, to the matching algorithm. However, widely used deep CNNs such as U-net often contain batch normalization layer in their architectures. In this paper, we utilize a common setup which merges the batch normalization layer with a preceding convolution to incorporate GI-FNM with batch normalization layer.

Without loss of generality, we assume that the feature map size is the same as the filter size.  Let $W_{\rm{conv}} \in \mathbb{R}^{C^{out} \times (C^{in}\cdot w \cdot h)}$ and $b_{\rm{conv}} \in \mathbb{R}^{C^{out}}$ be the parameters of the convolutional layer that precedes batch normalization. Given a input data $x \in \mathbb{R}^{(C^{in}\cdot w \cdot h)}$, the convolutional operator can be simply expressed as:
\begin{equation}\label{conv_layer}
    f = W_{\rm{conv}} * x + b_{\rm{conv}}.
\end{equation}
Batch normalization (BN) is a popular method used in modern neural networks as it often reduces training time and potentially improves generalization.  Given the outputting feature of preceding convolutional layer, it can be normalized as follows:
\begin{equation}\label{batch_normlization}
    \hat{f} = \gamma\frac{f-\mu}{\sqrt{\sigma^2 + \epsilon}} + \beta,
\end{equation}
where $\mu$ and $\sigma^2$ are the mean and variance computed over a batch of feature, $\epsilon$ is a small constant included for numerical stability, $\gamma$ is the scaling factor and $\beta$ the shift factor. The parameters $\gamma$ and $\beta$ are slowly learned with gradient descent together with the other parameters of the network. 

If we take $W_{\rm{conv}}$ and $b_{\rm{conv}}$ into the Eq.~(\ref{batch_normlization})
we can get the new weights and bias as:
\begin{itemize}
    \item weights: $W_{\rm{BN}} = \gamma\cdot\frac{W_{\rm{conv}}}{\sqrt{\sigma^2 + \epsilon}}$;
    \item bias: $b_{\rm{BN}} = \gamma\cdot\frac{ (b_{\rm{conv}} - \mu)}{\sqrt{\sigma^2 + \epsilon}} + \beta$.
\end{itemize}
Thus the batch normalized feature can be directly obtained by:
\begin{equation}\label{bn_conv_layer}
    \hat{f} = W_{\rm{BN}} * x + b_{\rm{BN}}.
\end{equation}
By matching the fused weights $W_{\rm{BN}}$ and bias $b_{\rm{BN}}$, we enable the batch normalization layer in GI-FNM.


\end{document}


\maketitle
\section{Appendix A: BBP and IBP}
\label{Appendix2}
\textbf{Beta-Bernoulli Process and Indian Buffet Process}\quad Denote $Q$ as a random measure drawn from a Beta process: $Q | \gamma_0, H \sim \text{BP}(1, \gamma_0H)$, where $\gamma_0$ is the mass parameter, $H$ is the base measure over some domain $\Omega$ such that $H(\Omega) = 1$.
 One can show that $Q$ is a discrete measure with $Q=\sum_{i} q_i \delta_{\bmf{\theta}_{i}}$, which can be characterized by an infinitely countable set of (weight, atom) pairs $( q_i, \bmf{\theta}_i ) \in [0,1] \times \Omega$. The atoms $\bmf{\theta}_i$ can be drawn i.i.d from $H$ and the weights $\{q_i\}_{i=1}^{\infty}$ can be generated via a stick-breaking process \cite{teh2007stick}: $q_1 \sim \operatorname{Beta}(\gamma_0, 1), q_i = \prod_{g=1}^i q_g$. 
Then subsets of atoms in the random measure $Q$ can be picked via a Bernoulli process. That is, each subset $\mathcal{T}_s$ for $s = 1,\cdots, S$ can be distributed via a Bernoulli process with base measure $Q$: $\mathcal{T}_s | Q \sim \operatorname{BeP}(Q)$. Hence, subset $\mathcal{T}_s$ can also be viewed as a discrete measure $\mathcal{T}_s :=\sum_i a_{si}\delta_{\bmf{\theta}_i}$, which is formed by pairs $(a_{si}, \bmf{\theta}_i) \in \{0,1\} \times \Omega$, where $a_{si} | q_i \sim \operatorname{Bernoulli}(q_i), \forall i$ is a binary random variable indicating whether $\bmf{\theta}_i$ belongs to subset $\mathcal{T}_s$. We call such collection of subsets a Beta-Bernoulli process \cite{thibaux2007hierarchical}.

The Indian buffet process (IBP) specifies distribution on sparse binary matrices  \cite{ghahramani2006infinite}. IBP involves a metaphor of a sequence of customers tasting dishes in an infinite buffet: the first customer tastes $\operatorname{Poisson}(\gamma_0)$ dishes, every  subsequent  $s$th customer tastes each dish that is previously selected with probability $n_i / s$,  where $n_i = \sum_{s=1}^{S-1} a_{si}$, and then tastes  $\operatorname{Poisson}(\gamma_0 / s)$ new dishes. 
Marginalizing over Beta Process distributed $Q$ above will induce dependencies among subsets and recover the predictive distribution $\mathcal{T}_S | \mathcal{T}_1,\cdots,\mathcal{T}_{S-1} \sim \operatorname{BeP}(H\frac{\gamma_0}{S} + \sum_{i}\frac{n_i}{S}\delta_{\bmf{\theta}_i})$. That is equivalent to the IBP.

\section{Appendix B: Proof of Proposition 1 \cite{yurochkin2019bayesian}}
PFNM maximizes a posterior probability of the global atoms$\{\bmf{\theta}_i\}_{i=1}^\infty$ and assignments of observed neural network weight estimates to global atoms$\{ \bm{A}^s \}_{s=1}^S$.  Given estimates of the local model weights $\{ \bmf{w}_{sj} \text{ for } j = 1, \dots, J_s \}_{s=1}^S$, it has:
\begin{equation}
\label{eq16}
\max_{\{ \bmf{\theta}_i \}, \{ \bm{A}^s \} } P(\{ \bmf{\theta}_i\}, \{\bm{A}^s\} | \{\bmf{w}_{sj}\}) \\\propto P(\{\bmf{w}_{sj}\} | \{\bmf{\theta}_i\}, \{\bm{A}^s\} )P(\{\bm{A}^s\})P(\{\bmf{\theta}_i\}),
\end{equation}
by taking negative natural logarithm it can obtain:
\begin{equation}
\label{eq17}
\min_{\{ \bmf{\theta}_i \}, \{ \bm{A}^s \} } -\sum_i \bigg ( \sum_{s,j} \bmf{A}^s_{i,j} \log(p(\bmf{w}_{sj} | \sim \bmf{\theta}_i)) + \log(q(\bmf{\theta}_i)) \bigg ) \\
 - \log(P(\{ \bm{A}^s \})),
\end{equation}
expand probability function of multi-dimensional Gaussian distributions in equation~(\ref{eq17}), it obtains:
\begin{equation}
\label{eq18}
\min_{\{ \bmf{\theta}_i \}, \{ \bm{A}^s \} } \frac{1}{2} \sum_{i}\Bigg( \frac{|| \hat{\bmf{\theta}}_i - \bmf{\mu}_0 ||^2}{\sigma_0^2} + (D+K)\log(2\pi\sigma_0^2 ) \\
+ \sum_{s,j}A^s_{i,j} \frac{||\bmf{w}_{sj} - \hat{\bmf{\theta}}_i||^2}{\sigma_s^2}\Bigg)
 - \log(P(\{ \bm{A}^s \})).
\end{equation}
We now consider the first part of equation~(\ref{eq18}). Through the closed-form expression of $\{\bmf{\theta}_i\}$ estimated according to the Gaussian-Gaussian conjugacy:
\begin{equation}
\label{eq19}
\hat{\bmf{\theta}_i} =  \frac{\bmf{\mu}_0 / \sigma_0^2 + \sum_{s,j}A_{i,j}^s \bmf{w}_{sj} / \sigma^2_s}{1 / \sigma_0^2 + \sum_{s,j}A^s_{i,j} / \sigma^2_s} \text{ for } i = 1, ..., J,
\end{equation}
where for simplicity we assume $\bm{\Sigma}_0 = \bm{I} \sigma^2_0$ and $\bm{\Sigma}_s = \bm{I}\sigma^2_s$, we can now cast first part of equation~(\ref{eq18}) with respect only to $\{ \bm{A}^s \}_{s=1}^S$:
\begingroup
\allowdisplaybreaks
\begin{equation}
\label{eq20}
    \begin{aligned}
 		& \frac{1}{2} \sum_{i}\Bigg( \frac{|| \hat{\bmf{\theta}}_i - \bmf{\mu}_0 ||^2}{\sigma_0^2} + (D+K)\log(2\pi\sigma_0^2 ) + \sum_{s,j}A^s_{i,j} \frac{||\bmf{w}_{sj} - \hat{\bmf{\theta}}_i||^2}{\sigma_s^2} \Bigg) \\
 		 \cong &\frac{1}{2} \sum_{i}\Bigg( \langle \hat{\bmf{\theta}}_i, \hat{\bmf{\theta}}_i \rangle(\frac{1}{\sigma_0^2} + \sum_{s,j}\frac{A^s_{i,j}}{\sigma^2_s}) + (D+K)\log(2\pi\sigma_0^2 )  -  2\langle \hat{\bmf{\theta}}_i, \sum_{s,j}A^s_{i,j} \frac{\bmf{w}_{sj}}{\sigma_s^2}) \rangle\Bigg) \\
 		  = &-\frac{1}{2} \sum_{i}\Bigg(\frac{|| \sum\limits_{s,j} \bmf{A}^{s}_{i,j}\frac{\bmf{w}_{sj} - \bmf{\mu}_0}{\sigma_{s}^2}||^2}{( 1 / \sigma_0^2 + \sum\limits_{s,j} \bmf{A}^{s}_{i,j} / \sigma_{s}^2} - (D+K)\log(2\pi\sigma_0^2 )\Bigg). \\
    \end{aligned}
\end{equation}
\endgroup
Partition equation~(\ref{eq20}) between $i = 1,..., J_{-s'}$ and $i =J_{-s'} + 1,...,J_{-s'} + J_{s'}$, and because it is now solving for $ \bm{A}^{s'}$, it can subtract terms independent of $\bm{A}^{s'}$:
\begingroup
\allowdisplaybreaks
\begin{equation}
\label{eq21}
 	\begin{aligned}
 		& \sum_{i}\Bigg(\frac{|| \sum_{s,j} \bmf{A}^{s}_{i,j}\frac{\bmf{w}_{sj} - \bmf{\mu}_0}{\sigma_{s}^2}||^2}{( 1 / \sigma_0^2 + \sum_{s,j} \bmf{A}^{s}_{i,j} / \sigma_{s}^2} - (D+K)\log(2\pi\sigma_0^2 )\Bigg) \\
        \cong & \sum_{i=1}^{J_{-s'}} \Bigg(\frac{|| \sum_j \bmf{A}^{s'}_{i,j}\frac{\bmf{w}_{s'j} - \bmf{\mu}_0}{\sigma_{s'}^2} + \sum_{s \in -s',j} \bmf{A}^s_{i,j}\frac{\bmf{w}_{sj} - \bmf{\mu}_0}{\sigma_s^2} ||^2}{1 / \sigma_0^2 + \sum_{j} \bmf{A}^{s'}_{i,j} / \sigma_{s'}^2 + \sum_{s \in -s',j} \bmf{A}^s_{i,j} / \sigma^2_s} - \frac{||\sum_{s \in -s',j} \bmf{A}^s_{i,j}\frac{\bmf{w}_{sj} - \bmf{\mu}_0}{\sigma_s^2} ||^2}{1 / \sigma_0^2 + \sum_{s \in -s',j} \bmf{A}^s_{i,j} / \sigma^2_s } \Bigg) \\
        & \quad  + \sum_{i=J_{-s'}+1}^{J_{-s'} + J_{s'}} \Bigg(\frac{|| \sum_j \bmf{A}^{s'}_{i,j}\frac{\bmf{w}_{s'j} - \bmf{\mu}_0}{\sigma_{s'}^2} ||^2}{1 / \sigma_0^2 + \sum_j \bmf{A}^{s'}_{i,j} / \sigma^2_{s'}}\Bigg),
 	\end{aligned}
\end{equation}
\endgroup
observe that $\sum_j \bmf{A}^{s'}_{i,j} \in \{ 0,1 \}$, i.e. it is 1 if some neuron from dataset $s'$ is matched to global neuron $i$ and 0 otherwise. Thus equation~(\ref{eq21}) can rewritten as a linear sum assignment problem:
\begingroup
\allowdisplaybreaks
\begin{equation}
\label{eq22}
	\begin{aligned}
	  &\sum_{i=1}^{J_{-s'}} \sum_{j=1}^{J_{s'}} \bmf{A}^{s'}_{i,j} \Bigg(\frac{||\frac{\bmf{w}_{s'j} - \bmf{\mu}_0}{\sigma_{s'}^2} + \sum_{s \in -s',j} \bmf{A}^s_{i,j}\frac{\bmf{w}_{sj} - \bmf{\mu}_0}{\sigma_s^2} ||^2}{1 / \sigma_0^2 + 1 / \sigma_{s'}^2 + \sum_{s \in -s',j} \bmf{A}^s_{i,j} / \sigma^2_s} - \frac{||\sum_{s \in -s',j} \bmf{A}^s_{i,j}\frac{\bmf{w}_{sj} - \bmf{\mu}_0}{\sigma_s^2} ||^2}{1 / \sigma_0^2 + \sum_{s \in -s',j} \bmf{A}^s_{i,j} / \sigma^2_s}\Bigg) \\
	   & \quad +  \sum_{i=J_{-s'}+1}^{J_{-s'} + J_{s'}} \sum_{j=1}^{J_{s'}} \bmf{A}^{s'}_{i,j} \Bigg( \frac{||\frac{\bmf{w}_{s'j} - \bmf{\mu}_0}{\sigma_{s'}^2} ||^2}{1 / \sigma_0^2 + \sum_j \bmf{A}^{s'}_{i,j} / \sigma^2_{s'}}\Bigg).
	\end{aligned}
\end{equation}
\endgroup
Then consider the second term of equation~(\ref{eq18}), by subtracting terms independent of $\bm{A}^{s'}$ it has:
\begin{equation}
\label{eq23}
	\log(P(\bm{A}^{s'})) = \log(P(\bm{A}^{s'} | \bm{A}^{-s'})) + \log(P(\bm{A}^{-s'})).
\end{equation}
First, it can ignore $\log(P(\bm{A}^{-s'}))$ since now are optimizing for $\bm{A}^{s'}$ . Second, due to exchange ability of datasets
(i.e. customers of the IBP), $\bm{A}^{s'}$ can always be treated
as the last customer of the IBP. Denote
$n^{-s'}_i = \sum_{-s',j} \bmf{A}^{s'}_{i,j}$ as the number of times local weights were assigned to global atom $i$ outside of group $s'$. Now it can obtain the following:
\begingroup
\allowdisplaybreaks
\begin{equation}
\label{eq24}
\begin{aligned}
  \log P(\bm{A}^{s'} | \bm{A}^{-s'}) & \cong \sum_{i=1}^{J_{-s'}} \Bigg(\bigg(\sum_{j=1}^{J_{s'}} \bmf{A}^{s'}_{i,j}\bigg)\log\frac{n^{-s'}_i}{S} + \bigg(1 - \sum_{j=1}^{J_{s'}} \bmf{A}^{s'}_{i,j}\bigg)\log\frac{S - n^{-s'}_i}{S}\Bigg)  \\
	& \qquad -  \log\Bigg(\sum_{i=J_{-s'}+1}^{J_{-s'} + J_{s'}} \sum_{j=1}^{J_{s'}} \bmf{A}^{s'}_{i,j}\Bigg) + \Bigg(\sum_{i=J_{-s'}+1}^{J_{-s'} + J_{s'}} \sum_{j=1}^{J_{s'}} \bmf{A}^{s'}_{i,j}\Bigg)\log\frac{\gamma_0}{J}.
\end{aligned}
\end{equation}
\endgroup
equation~(\ref{eq24}) thus can be rearranged as a linear sum assignment problem:
\label{eq25}
\begin{equation}
   \sum_{i=1}^{J_{-s'}} \sum_{j=1}^{J_{s'}} \bmf{A}^{s'}_{i,j}\log\frac{ n^{-s'}_i}{S - n^{-s'}_i}  +  \sum_{i=J_{-s'}+1}^{J_{-s'} + J_{s'}} \sum_{j=1}^{J_{s'}} \bmf{A}^{s'}_{i,j}\Bigg(\log\frac{\gamma_0}{S} - \log(i - J_{-s'}) \Bigg).
\end{equation}
Combining equation~(\ref{eq22}) and equation~(\ref{eq25}), we arrive at the cost specification shown in (6) of the main text. That completes the proof of Proposition 1 in the main text.

\section{Appendix C: Proof of Proposition 2}
\begin{proof}
i) Let's prove the first part. From Bayesian theory, the posterior is given by
  \begin{align}
      p(\bmf{\theta}_i \vert \bmf{w}_{sj}) & \propto p(\bmf{\theta}_i)p(\bmf{w}_{sj}) \\
      & \propto \exp{(\frac{-\bmf{\theta}_i^T\bmf{\theta}_i+2\bmf{\theta}_i^T\bmf{\mu}_0-\bmf{\mu}_0^T\bmf{\mu}_0}{2\sigma_0^2})} \\
      &  \exp{(\frac{-\bmf{w}_{sj}^T\bmf{w}_{sj}+2\bmf{\theta}_i^T\bmf{w}_{sj}-\bmf{\theta}_i^T\bmf{\theta}_i}{2\sigma_s^2})}. \\
  \end{align}
  From Gaussian conjugate, the product of two Gaussians is still a Gaussian, we will rewrite this in the form
  \begin{align}
      p(\bmf{\theta}_i \vert \bmf{w}_{sj}) \propto \\
      &  \exp{\bigl[-\frac{\bmf{\theta}_i^T\bmf{\theta}_i}{2}(\frac{1}{\sigma_0^2} + \frac{1}{\sigma_s^2})+\bmf{\theta}_i^T(\frac{\bmf{\mu}_0}{\sigma_0^2}+\frac{\bmf{w}_{sj}}{\sigma_s^2}) - (\frac{\bmf{\mu}_0^T\bmf{\mu}_0}{2\sigma_0^2}+\frac{\bmf{w}_{sj}^T\bmf{w}_{sj}}{2\sigma_s^2})\bigr]} \\
      & \mathop{=}\limits^{def} \exp{[\frac{-\bmf{\theta}_i^T\bmf{\theta}_i+2\bmf{\theta}_i^T\bmf{\hat{\mu}}-\bmf{\hat{\mu}}^T\bmf{\hat{\mu}}}{2\hat{\sigma}^2}]}
      = \exp{[-\frac{1}{2\hat{\sigma}^2}(\norm{\bmf{\theta}_i - \bmf{\hat{\mu}}}^2)]},
  \end{align}
  where $\bmf{\hat{\mu}}$ and $\hat{\sigma}^2$ denote the mean and variance of the posterior Gaussian. By completing the square, we first match the coefficients of $\bmf{\theta}_i^T\bmf{\theta}_i$---second power of $\bmf{\theta}_i$, and find $\hat{\sigma}^2$ is given by
  \begin{align}
      \frac{1}{\hat{\sigma}^2} & = \frac{1}{\sigma_0^2} + \frac{1}{\sigma_s^2},\\
      \hat{\sigma}^2 &= \frac{1}{\frac{1}{\sigma_0^2} + \frac{1}{\sigma^2_{s}}},
  \end{align}
  and then match the coefficients of first power $\bmf{\theta}_i^T$ we get
  \begin{align}
      \frac{\bmf{\hat{\mu}}}{\hat{\sigma}^2} &= \frac{\bmf{\mu}_0}{\sigma_0^2}+\frac{\bmf{w}_{sj}}{\sigma_s^2},
  \end{align}
  hence
  \begin{align}
      \bmf{\hat{\mu}} = \hat{\sigma}^2(\frac{\bmf{\mu}_0}{\sigma_0^2}+\frac{\bmf{w}_{sj}}{\sigma_s^2}).
  \end{align}
  ii) Now let's prove the second part. When $\bmf{\theta}_i$ has been assigned local neurons from other assignments $\bmf{Z}_i^{-s'} = \{(s, j) \vert A_{ij}^s = 1, s \in -s'\}$,from Bayesian theory, the posterior is given by
  \begin{align}
      p(\bmf{\theta}_i \vert \bmf{Z}_i^{-s'}) & \propto p(\bmf{\theta}_i)\prod_{z\in \bmf{Z}_i^{-s'}}p(\bmf{w}_{z}) \\
      & \propto \exp{(\frac{-\bmf{\theta}_i^T\bmf{\theta}_i+2\bmf{\theta}_i^T\bmf{\mu}_0-\bmf{\mu}_0^T\bmf{\mu}_0}{2\sigma_0^2})} \\
      &  \exp{(-\sum_{s\in {-s'},j} A_{i,j}^s\frac{\bmf{w}_{sj}^T\bmf{w}_{sj}}{\sigma_s^2}  +  \bmf{\theta}_i^T\sum_{s\in {-s'},j} A_{i,j}^s\frac{\bmf{w}_{sj}}{\sigma_s^2}  -  \bmf{\theta}_i^T\bmf{\theta}_i \sum_{s\in {-s'},j} A_{i,j}^s\frac{1}{2\sigma_s^2}
      )}.
  \end{align}
  Since the product of two Gaussians is a Gaussian, we will rewrite this in the form
  \begin{align}
      p(\bmf{\theta}_i \vert \bmf{Z}_i^{-s'}) & \propto \exp{\bigl[-\frac{\bmf{\theta}_i^T\bmf{\theta}_i}{2}(\frac{1}{\sigma_0^2} + \sum_{s\in {-s'},j} A_{i,j}^s\frac{1}{\sigma_s^2}) +\bmf{\theta}_i^T(\frac{\bmf{\mu}_0}{\sigma_0^2} + \sum_{s\in {-s'},j} A_{i,j}^s\frac{\bmf{w}_{sj}}{\sigma_s^2}) - (\frac{\bmf{\mu}_0^T\bmf{\mu}_0}{2\sigma_0^2} + \sum_{s\in {-s'},j} A_{i,j}^s\frac{\bmf{w}_{sj}^T\bmf{w}_{sj}}{2\sigma_s^2})\bigr]} \\
      & \mathop{=}\limits^{def} \exp{(\frac{-\bmf{\theta}_i^T\bmf{\theta}_i+2\bmf{\theta}_i^T\bmf{\hat{\mu}}-\bmf{\hat{\mu}}^T\bmf{\hat{\mu}}}{2\hat{\sigma}^2})}
      = \exp{[-\frac{1}{2\hat{\sigma}^2}(\norm{\bmf{\theta}_i - \bmf{\hat{\mu}}}^2)]},
  \end{align}
  where $\bmf{\hat{\mu}}$ and $\hat{\sigma}^2$ denote the mean and variance of the posterior Gaussian. By completing the square, we first match the coefficients of $\bmf{\theta}_i^T\bmf{\theta}_i$---second power of $\bmf{\theta}_i$, and find $\hat{\sigma}^2$ is given by
  \begin{align}
      \frac{1}{\hat{\sigma}^2} & = \frac{1}{\sigma_0^2} + \sum_{s\in {-s'},j} A_{i,j}^s\frac{1}{\sigma_s^2},\\
      \hat{\sigma}^2 &= \frac{1}{\frac{1}{\sigma_0^2} + \sum_{s\in {-s'},j} A_{i,j}^s\frac{1}{\sigma_s^2}},
  \end{align}
  and then match the coefficients of first power $\bmf{\theta}_i^T$ we get
  \begin{align}
      \frac{\bmf{\hat{\mu}}}{\hat{\sigma}^2} &= \frac{\bmf{\mu}_0}{\sigma_0^2} + \sum_{s\in {-s'},j} A_{i,j}^s\frac{\bmf{w}_{sj}}{\sigma_s^2},
  \end{align}
  hence
  \begin{align}
      \bmf{\hat{\mu}} = \hat{\sigma}^2(\frac{\bmf{\mu}_0}{\sigma_0^2} + \sum_{s\in {-s'},j} A_{i,j}^s\frac{\bmf{w}_{sj}}{\sigma_s^2}).
  \end{align}
\end{proof}

\section{Appendix D: Iteratively Layer-wise Matched Aggregation via NAFI}
As the empirical study in \cite{wang2020federated} demonstrates, directly applying the matching algorithms  fails on deep architectures which are necessary to solve more complex tasks. Thus to alleviate this problem ,
we also extend NAFI to the following layer-wise matching scheme. Firstly, the server only collects the first layer weights from the clients and applies NAFI
to acquire the first layer weights of the federated model. Then, the server broadcasts these weights
to the clients, which proceed to train all consecutive layers on their datasets while keeping the matched layers frozen. Repeat this process until the last layer, where we make a weighted average based on class proportions of each client's data points.  We summarize our
 layer-wise version of NAFI in Algorithm 2. The layer-wise approach requires communication
rounds that equal to the number of layers in a neural network. Experimental results show that with layer-wise
matching, NAFI performs well on the ConvNets even for U-nets which has a complex architecture. In the more challenging
heterogeneous setting, NAFI outperforms FedAvg, FedProx trained with same number of
communication rounds (5 for ConvNet and 19 for U-net).
\begin{algorithm}
\caption{Iteratively Layer-wise NAFI}
\label{alg2}
\begin{algorithmic}[1]
    \REQUIRE ~~\\
    Collected local weights of $N$-layer architectures $\{ \bmf{W}_s^{(0)}, \cdots, \bmf{W}_s^{(N-1)} \}_{s=1}^{S}$ from $S$ clients;
    \ENSURE ~~\\
    New constructed global weights $\{ \bmf{W}^{(0)}, \cdots, \bmf{W}^{(N-1)} \}$.
 	\STATE $n = 0$;
 	\WHILE{layers $ n \leq N$}
 	    \IF{n < N-1}
 		    \STATE $\{A_s \}_{s=1}^S = \operatorname{NAFI}(\{ \bmf{W}_s^{(n)} \}_{s=1}^S)$;
 		    \STATE $\bmf{W}^{(n)} = \frac{1}{S}\sum_s \bmf{W}_s^{(n)}A^T_s$;
 		\ELSE 
 		    \STATE $\bmf{W}^{(n)} = \sum_s \bmf{p_{s}} \bmf{\cdot} \bmf{W}^{(n)}_s $ where $\bmf{p_{s}}$ is vector of fraction of data points with each label on worker $s$, and $\bmf{\cdot}$ denotes the dot product;
 		  \ENDIF
 		  \FOR{ $s \in \{ 1, \cdots,S \}$ }
 		    \STATE $\bmf{W}_s^{(n+1)} = A_s \bmf{W}_s^{(n+1)}$;
 		    \STATE Train $\{ \bmf{W}_s^{(n+1)},\cdots, \bmf{W}_s^{(N-1)}\}$ with $\bmf{W}_s^{(n)}$ frozen;
 		  \ENDFOR
 	    \STATE $n = n+1$
 	\ENDWHILE
\end{algorithmic}
\end{algorithm}

\section{Appendix E: Extending federated matching algorithm to Batch Normalization Layer}
\label{appendix4}
Although \cite{wang2020federated} shows how to apply the PFNM to CNNs, it doesn't enable additional deep learning building blocks, e.g., batch normalization layer, to the matching algorithm. However, widely used deep CNNs such as U-net often contain batch normalization layer in their architectures. In this paper, we utilize a common setup which merges the batch normalization layer with a preceding convolution to incorporate GI-FNM with batch normalization layer.

Without loss of generality, we assume that the feature map size is the same as the filter size.  Let $W_{\rm{conv}} \in \mathbb{R}^{C^{out} \times (C^{in}\cdot w \cdot h)}$ and $b_{\rm{conv}} \in \mathbb{R}^{C^{out}}$ be the parameters of the convolutional layer that precedes batch normalization. Given a input data $x \in \mathbb{R}^{(C^{in}\cdot w \cdot h)}$, the convolutional operator can be simply expressed as:
\begin{equation}\label{conv_layer}
    f = W_{\rm{conv}} * x + b_{\rm{conv}}.
\end{equation}
Batch normalization (BN) is a popular method used in modern neural networks as it often reduces training time and potentially improves generalization.  Given the outputting feature of preceding convolutional layer, it can be normalized as follows:
\begin{equation}\label{batch_normlization}
    \hat{f} = \gamma\frac{f-\mu}{\sqrt{\sigma^2 + \epsilon}} + \beta,
\end{equation}
where $\mu$ and $\sigma^2$ are the mean and variance computed over a batch of feature, $\epsilon$ is a small constant included for numerical stability, $\gamma$ is the scaling factor and $\beta$ the shift factor. The parameters $\gamma$ and $\beta$ are slowly learned with gradient descent together with the other parameters of the network. 

If we take $W_{\rm{conv}}$ and $b_{\rm{conv}}$ into the Eq.~(\ref{batch_normlization})
we can get the new weights and bias as:
\begin{itemize}
    \item weights: $W_{\rm{BN}} = \gamma\cdot\frac{W_{\rm{conv}}}{\sqrt{\sigma^2 + \epsilon}}$;
    \item bias: $b_{\rm{BN}} = \gamma\cdot\frac{ (b_{\rm{conv}} - \mu)}{\sqrt{\sigma^2 + \epsilon}} + \beta$.
\end{itemize}
Thus the batch normalized feature can be directly obtained by:
\begin{equation}\label{bn_conv_layer}
    \hat{f} = W_{\rm{BN}} * x + b_{\rm{BN}}.
\end{equation}
By matching the fused weights $W_{\rm{BN}}$ and bias $b_{\rm{BN}}$, we enable the batch normalization layer in GI-FNM.

\bibliographystyle{unsrt} 
\newpage
\bibliography{references}